\begin{document}
\begin{CJK}{UTF8}{gbsn}

\title{OpenLS-DGF: An Adaptive Open-Source Dataset Generation Framework for Machine Learning Tasks \\in Logic Synthesis}

\author{
\IEEEauthorblockN{Liwei~Ni\textsuperscript{1,2,3,}$^{\star}$,}
\and
\IEEEauthorblockN{Rui~Wang\textsuperscript{4},}
\and
\IEEEauthorblockN{Miao~Liu\textsuperscript{3},}
\and
\IEEEauthorblockN{Xingyu~Meng\textsuperscript{2},}
\and
\IEEEauthorblockN{Xiaoze~Lin\textsuperscript{1,2,3},}
\and
\IEEEauthorblockN{Junfeng~Liu\textsuperscript{2},}
\and
\IEEEauthorblockN{Guojie~Luo\textsuperscript{5},}
\and
\IEEEauthorblockN{Zhufei~Chu\textsuperscript{6},}
\and
\IEEEauthorblockN{Weikang~Qian\textsuperscript{7},}
\and
\IEEEauthorblockN{Xiaoyan~Yang\textsuperscript{8},}
\and
\IEEEauthorblockN{Biwei~Xie\textsuperscript{1,2,3},}
\and
\IEEEauthorblockN{Xingquan~Li\textsuperscript{2,}$^\text{\Letter}$,}
 \and
\IEEEauthorblockN{Huawei~Li\textsuperscript{1,2,3}} \\

\IEEEauthorblockA{\textsuperscript{1}Institute of Computing Technology, Chinese Academy of Sciences, Beijing, China}  \\
\IEEEauthorblockA{\textsuperscript{2}Pengcheng Laboratory, Shenzhen, China}  \\
\IEEEauthorblockA{\textsuperscript{3}University of Chinese Academy of Sciences, Beijing, China}  \\
\IEEEauthorblockA{\textsuperscript{4}Shenzhen University, Shenzhen, China}  \\
\IEEEauthorblockA{\textsuperscript{5}Peking University, Beijing, China}  \\
\IEEEauthorblockA{\textsuperscript{6}Ningbo University, Ningbo, China} \\
\IEEEauthorblockA{\textsuperscript{7}Shanghai Jiao Tong University, Shanghai, China}  \\
\IEEEauthorblockA{\textsuperscript{8}Hangzhou Dianzi University, Hangzhou, China} \\

Emails: \{nlwmode@gmail.com$^{\star}$, lixq01@pcl.ac.cn$^{\text{\Letter}}$\} 
}

\maketitle

\begin{abstract}

This paper introduces OpenLS-DGF, an adaptive logic synthesis dataset generation framework, to enhance machine learning~(ML) applications within the logic synthesis process.
Previous dataset generation flows were tailored for specific tasks or lacked integrated machine learning capabilities.
While OpenLS-DGF supports various machine learning tasks by encapsulating the three fundamental steps of logic synthesis: Boolean representation, logic optimization, and technology mapping.
It preserves the original information in both Verilog and machine-learning-friendly GraphML formats.
The verilog files offer semi-customizable capabilities, enabling researchers to insert additional steps and incrementally refine the generated dataset.
Furthermore, OpenLS-DGF includes an adaptive circuit engine that facilitates the final dataset management and downstream tasks.
The generated OpenLS-D-v1 dataset comprises 46 combinational designs from established benchmarks, totaling over 966,000 Boolean circuits. 
OpenLS-D-v1 supports integrating new data features, making it more versatile for new challenges.
This paper demonstrates the versatility of OpenLS-D-v1 through four distinct downstream tasks: circuit classification, circuit ranking, quality of results (QoR) prediction, and probability prediction.
Each task is chosen to represent essential steps of logic synthesis, and the experimental results show the generated dataset from OpenLS-DGF achieves prominent diversity and applicability.
The source code and datasets are available at \href{https://github.com/Logic-Factory/ACE/blob/master/OpenLS-DGF/readme.md}{https://github.com/Logic-Factory/ACE/blob/master/OpenLS-DGF/readme.md}.
\end{abstract}

\begin{IEEEkeywords}
Logic Synthesis, Machine Learning, Dataset, Adaptive, Application
\end{IEEEkeywords}

\section{Introduction}
\label{sec:intro}

\IEEEPARstart{L}{ogic} synthesis is a key phase in the electronic design automation~(EDA) flow of digital circuits, translating high-level specifications into a gate-level netlist.
Recently, there has been a trend towards adopting ML approaches for the EDA~\cite{MLEDA} domain.
Various machine learning methodologies have been proposed, demonstrating improvements in different aspects of the logic synthesis process, including logic optimization~\cite{rai2020logicsynthesismeetsmachine, deeplearningforlogicoptimization, lsoracle, aisyn, ni2023adaptive}, technology mapping~\cite{slap, aimap, easymap}, and formal verification~\cite{GNN-RE, Gamora}.
These machine learning-based techniques have shown their promise in improving the efficiency and quality of logic synthesis steps.
In order to further develop these techniques, it is crucial to introduce more comprehensive and reliable datasets.

Previous benchmarks~\cite{ISCAS85, ISCAS89, MCNC91, IWLS93, IWLS2005, IWLS2015, opencores} provide a foundation for testing, comparison, and enhancement, significantly advancing the development of EDA tools and methodologies.
Moreover, logic synthesis datasets~\cite{openabc-d, DeepGate, Gamora} such as OpenABC-D, have been derived from these foundational benchmarks.
However, these datasets are often tailored for specific tasks, limiting their use cases for diverse applications in machine learning.
This paper underscores the need for a more versatile and adaptive dataset generation framework capable of supporting a variety of machine-learning tasks in logic synthesis.
Such a framework should ideally possess the following attributes:
\begin{itemize}
    \item Diversity: Generating a dataset covers a wide range of design types and categories, ensuring it can cater to a diverse array of use cases and applications;
    \item Versatility: Generating a dataset has the capacity to support various machine learning tasks, facilitating the sharing of the same dataset across different tasks;
    \item Adaptivity: Generating a dataset can adapt to different tasks, enabling the extraction of sub-datasets tailored to specific downstream tasks.
\end{itemize}
While the EDA flows like OpenLane~\cite{OpenLANE, openroad} are primarily aimed at facilitating the chip tape-out process, they do not inherently provide the specific needs for dataset generation.
This further emphasizes the demand for a dedicated, adaptable dataset framework within the logic synthesis domain.

To address these limitations, we introduce OpenLS-DGF, an adaptive logic synthesis dataset generation framework designed to support a wide range of machine learning tasks within logic synthesis.
The proposed framework covers the three fundamental stages of logic synthesis: Boolean representation, Logic optimization, and Technology mapping.
The comprehensive workflow includes seven distinct steps, including the raw file generation and the dataset packing.
OpenLS-DGF preserves all original information in the intermediate files, which are stored in both Verilog and ML-friendly GraphML formats.
The verilog files offer semi-customization capabilities, enabling researchers to integrate desired intermediate steps and utilize previously generated verilog files.
Furthermore, OpenLS-DGF includes a specialized circuit engine, which was developed to facilitate effective dataset packaging and extraction of adaptive sub-datasets for multiple tasks.
This circuit engine can faithfully reconstruct the original Boolean circuit information, enabling a wide range of operations to be directly applied for further processing.

We generate the OpenLS-D-v1 dataset utilizing the above framework to facilitate multiple machine-learning tasks.
OpenLS-D-v1 starts from 46 combinational designs from well-established benchmarks~\cite{IWLS2005, IWLS2015, opencores}, including a diverse circuit type, such as arithmetic circuits, control circuits, and IP cores.
It encompasses more than 966,000 Boolean circuits, each derived from 1,000 unique synthesis recipes.
The breakdown of the dataset includes 7000 Boolean networks across 7 logic types, alongside 7000 ASIC and 7000 FPGA netlists.
Moreover, QoRs are preserved in JSON format alongside their corresponding Boolean circuits.
To showcase the versatility of OpenLS-D-v1, we have implemented and tested four typical machine-learning tasks within logic synthesis: circuit classification, circuit ranking, QoR prediction, and probability prediction.
Each task explores unique processes of logic synthesis, employing datasets that are directly extracted and specifically reformatted from OpenLS-D-v1.
The experimental results substantiate the diversity and effectiveness of OpenLS-D-v1, confirming its value across various machine-learning applications and demonstrating the prominent diversity and applicability of OpenLS-DGF.

The contributions can be summarised as follows:
\begin{itemize}
    \item We introduced OpenLS-DGF, an adaptive logic synthesis dataset generation framework that covers three pivotal stages: Boolean representation, logic optimization, and technology mapping. OpenLS-DGF also offers semi-customized capabilities, allowing the reuse of intermediate files for researchers to integrate additional steps as needed;
    \item We developed an adaptive circuit engine capable of loading multiple types of Boolean circuits without losing any information. This engine serves as a bridge between the framework and various downstream tasks, facilitating the extension of operations to generate desired features;
    \item We generated OpenLS-D-v1, an adaptive logic synthesis dataset generated using OpenLS-DGF, designed to support various machine learning tasks. This ensures that all feasible downstream tasks can derive their specific datasets directly from OpenLS-D-v1;
    \item We implemented four typical downstream tasks utilizing the OpenLS-D-v1 dataset to demonstrate its diversity and effectiveness. Moreover, the circuit ranking is a novel task in logic synthesis introduced by this study, highlighting improvements in technology mapping.
\end{itemize}

This paper is structured as follows:
\cref{sec:background} provides the background and related works;
\cref{sec:flow} presents the details of OpenLS-DGF;
\cref{sec:dataset} introduce the generated OpenLS-D-v1 dataset and its key characteristics;
\cref{sec:tasks} formulates the selected four downstream tasks on OpenLS-D-v1 and gives the experimental results;
\cref{sec:discussion} gives the discussion, and \cref{sec:conclusion} draws the conclusion.

\section{Background and Related Work}
\label{sec:background}

\begin{figure}
    \centering
    \includegraphics[width=1\linewidth]{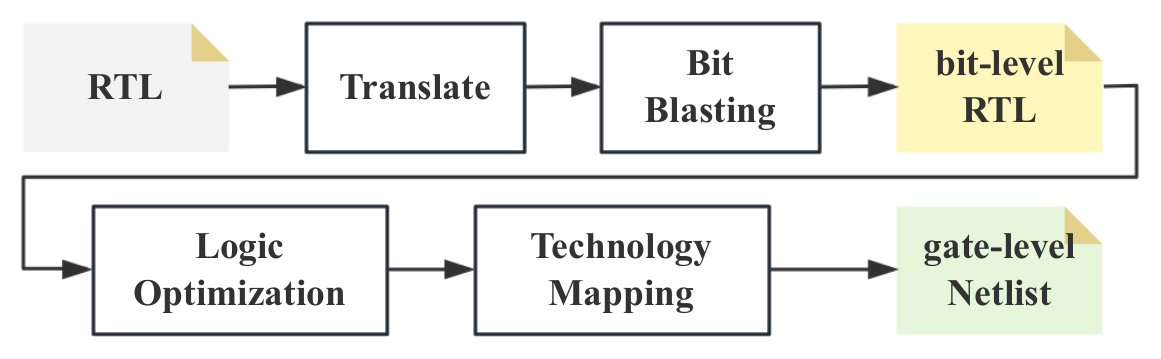}
\vspace{-0.3cm}
    \caption{The Logic Synthesis flow.}
    \label{fig:eda_flow}
\vspace{-0.3cm}
\end{figure}

\cref{fig:eda_flow} illustrates the essential steps of the logic synthesis flow.
The following subsections will introduce the fundamental concepts and related works of logic synthesis.

\subsection{Background}

\subsubsection{Boolean circuit and Functional completeness}

\newcolumntype{M}[1]{>{\centering\arraybackslash}m{#1}}
\begin{table}[t]
\centering
\scriptsize
\caption{The logic gates pool in this framework.}
\begin{tabular}{M{1.4cm}|M{2.6cm}|M{3cm}}
\toprule
\textbf{Logic Gate} & \textbf{Boolean Expression ~~~f(A, B, C)} & \textbf{Distinctive shape} \\
\midrule
Primitive Gates\footnotemark[1] & ... & ... \\
\midrule
MAJ3   & $A \cdot B + A \cdot C + B \cdot C$ & \includegraphics[width=1.5cm]{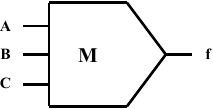} \\
NAND3  & $\overline{A \cdot B \cdot C}$      & \includegraphics[width=1.5cm]{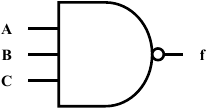} \\
NOR3   & $\overline{A + B + C}$              & \includegraphics[width=1.5cm]{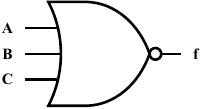} \\
MUX21  & $C \cdot B + \overline{C}\cdot A$   & \includegraphics[width=1.5cm]{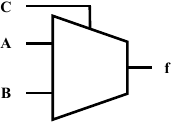} \\
NMUX21 & $C \cdot A + \overline{C}\cdot B$   & \includegraphics[width=1.5cm]{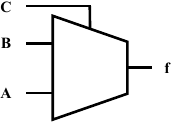} \\
AOI21  & $\overline{A \cdot B + C}$          & \includegraphics[width=2.5cm]{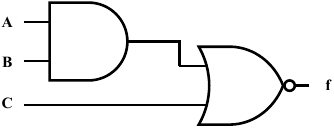} \\
OAI21  & $\overline{(A + B) \cdot C}$        & \includegraphics[width=2.5cm]{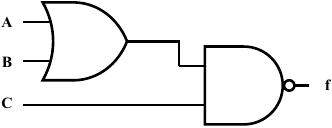} \\ 
\bottomrule
\end{tabular}
\label{tab:cell}
\vspace{-0.2cm}
\end{table}
\footnotetext[1]{Primitive gates are composed of: NOT, BUFFER, AND2, NAND2, OR2, NOR2, XOR2, XNOR2.}

The \textit{\textbf{Boolean circuit}} $\mathcal{C}$ is defined as a computational graph representation with specific Boolean function. 
It can be formulated by the following: $\mathcal{C}= (\mathcal{V}, \mathcal{E}), \mathcal{V} = \mathcal{V}^{PI} \cup \mathcal{V}^{LG} \cup \mathcal{V}^{PO}, (v_i \to v_j) \in \mathcal{E} ~|~ v_i \in \mathcal{V}, v_j \in \mathcal{V},$
where $\mathcal{V}^{PI}$ represents the primary input nodes (PIs), $\mathcal{V}^{PO}$ represents the primary output nodes (POs), and $\mathcal{V}^{LG}$ represents the internal logic gates.
\cref{tab:cell} shows the used logic gates in this paper but the technology-dependent cells.
Each edge $v_i \to v_j$ in $\mathcal{E}$ represents the connected signals between nodes.

Furthermore, the technology-independent Boolean circuits, also referred to as the Boolean network, primarily concentrate on the topology and Boolean function.
On the other hand, the technology-dependent Boolean circuit, which represents a gate-level netlist, incorporates physical attributes such as area, and timing.
It should be noted that the sequential Boolean circuits are not discussed in this work.

\begin{table}[tb]
\centering
\scriptsize
\caption{The related functional complete set.}
\begin{tabular}{c|c}
\toprule
\textbf{Logic Circuit Type}         & \textbf{Functional Complete Set} \\
\midrule
\makecell{And-Inverter Graph\\(\textbf{AIG})}       & NOT, AND2 \\ \hline
\makecell{Or-Inverter Graph\\(\textbf{OIG})}        & NOT, OR2 \\ \hline
\makecell{Xor-And-Inverter Graph\\(\textbf{XAG})}   & NOT, XOR2, AND2 \\ \hline
\makecell{Majority-Inverter Graph\\(\textbf{MIG})}  & NOT, MAJ3 \\ \hline
\makecell{Primitive-Gate Graph\\(\textbf{PRIMARY})} & \makecell{NOT, AND2, NAND2, \\OR2, NOR2, XOR2, XNOR2}  \\ \hline
\makecell{Generic-Technology Graph\\(\textbf{GTG})} & \makecell{\{\textbf{PRIMARY}\}, NAND3, NOR3, \\MUX21, NMUX21, AOI21, OAI21}  \\
\bottomrule
\end{tabular}
\label{tab:fcs}
\vspace{-0.5cm}
\end{table}

\label{sec:background:fcs}
\begin{definition}[Functional completeness]
\label{def:fcs}
A set of logical gates $\mathcal{S}$ is called functionally complete, if for any Boolean function $f$, there exists a circuit using only gates from $\mathcal{S}$ that can represent $f$.
\end{definition}

As defined at \cref{def:fcs}, the Boolean circuit employs a functionally complete set, thereby enabling the representation of any Boolean function through circuit type.
\cref{tab:fcs} illustrates the Boolean networks utilized in this work, including AIG, OIG, XAG, MIG, PRIMARY, and GTG.
While the gate-level netlists are involved after the technology mapping.

\textbf{\textit{Boolean representation task}:}
Different Boolean circuits, based on functional completeness, can exhibit varying performances across different stages of the EDA flow, a phenomenon known as the Boolean representation problem.

\subsubsection{Logic Optimization and Technology Mapping}
The Boolean equivalence~\cite{boolean_equivalence} asserts that the different Boolean circuit graph structure may lead to the same Boolean function.
Moreover, it is the fundamental theory for logic optimization and technology mapping.
The \textit{\textbf{logic optimization}} algorithms aim to reduce the cost of the Boolean network to improve the desired criterion~(area, timing, ...).
Then, the optimized Boolean networks are translated into the gate-level netlists by \textit{\textbf{technology mapping}} with the given standard cell library.
This standard cell library encompasses a functionally complete set of gate-level netlists equipped with essential physical attributes required for technology mapping.

\textbf{\textit{Circuit classification task}:}
According to the Boolean equivalence theory, the Boolean networks derived from the same design have the same functionality.
From this viewpoint, these Boolean networks are in the same class.

\textbf{\textit{Probability prediction task}:}
The functionally equivalent nodes within one Boolean network can be merged to reduce the size.
By computing the probability of nodes, it is possible to effectively identify and filter the functionally equivalent nodes.

\subsubsection{Static Timing Analysis}
Static Timing Analysis~(STA) is a critical technique used to ensure that the gate-level netlist meets specified timing requirements.
STA involves calculating the delay for each signal path within the circuit.
The total delay for any path is determined by the equation:
$$
\textit{path\_delay} = \sum (\textit{gate\_delay}) + \sum (\textit{wire\_delay}),
$$
where $gate\_delay$ represents the internal delay of each gate and $wire\_delay$ accounts for the delay of the wires between gates.
The maximum path delay of the critical path~(arrival time) is typically used to assess if the circuit can operate within the desired timing period.

\textbf{\textit{QoR prediction task}:}
Different logic optimization and technology mapping configurations can lead to different QoR results.
If the QoR distribution is determined, it is possible to predict the related QoR.

\subsubsection{Graph Neural Network~(GNN)}
GNNs are particularly adept at handling data structured in the form of graphs, offering important insights in applications where relationships and interactions are crucial.
A GNN utilizes a multi-layered structure where each layer $K$ updates a node's representation by aggregating features from its neighbors. This aggregation follows the update rule:
\begin{align*}
    h_{\mathcal{N}(v)}^k = & \text{AGGREGATE}_k(\{h_u^{k-1}, \forall{u} \in \mathcal{N}(v)\}),   \\
    h_v^k = & \sigma( \mathbf{W}^k \cdot \text{CONCAT}(h_v^{k-1},h_{\mathcal{N}(v)}^k)).
\end{align*}
where $h_v^k$ represents the embedding of node $v$ in depth $k$; the \text{AGGREGATE} is the differentiable aggregator function; $\mathcal{N}(v)$ represents the neighborhood nodes of node $v$; $\mathbf{W}$ is the weight matrices and $\sigma$ is the non-linearity function.


\subsection{Related Work}
Since the release of the "ImageNet" dataset~\cite{ImageNet}, the field of artificial intelligence has experienced significant advancements in computer vision. ImageNet has enabled a variety of applications, including image classification, segmentation, and detection. This progress has led to the development of innovative ML algorithms that are transforming fields such as autonomous driving, robotics, and natural language processing. 

In recent years, the application of ML in logic synthesis has also seen considerable growth. The "OpenABC-D"~\cite{openabc-d} dataset is generated by the OpenLane~\cite{OpenLANE} flow, which mainly produces intermediate files rather than being specifically designed for dataset generation. Existing datasets, such as those for probability prediction~\cite{DeepGate} and node classification~\cite{Gamora}, are typically tailored for specific tasks. However, there is a notable lack of a comprehensive dataset that can support multiple tasks.

\begin{figure*}[t]
    \centering
    \includegraphics[width=1\linewidth]{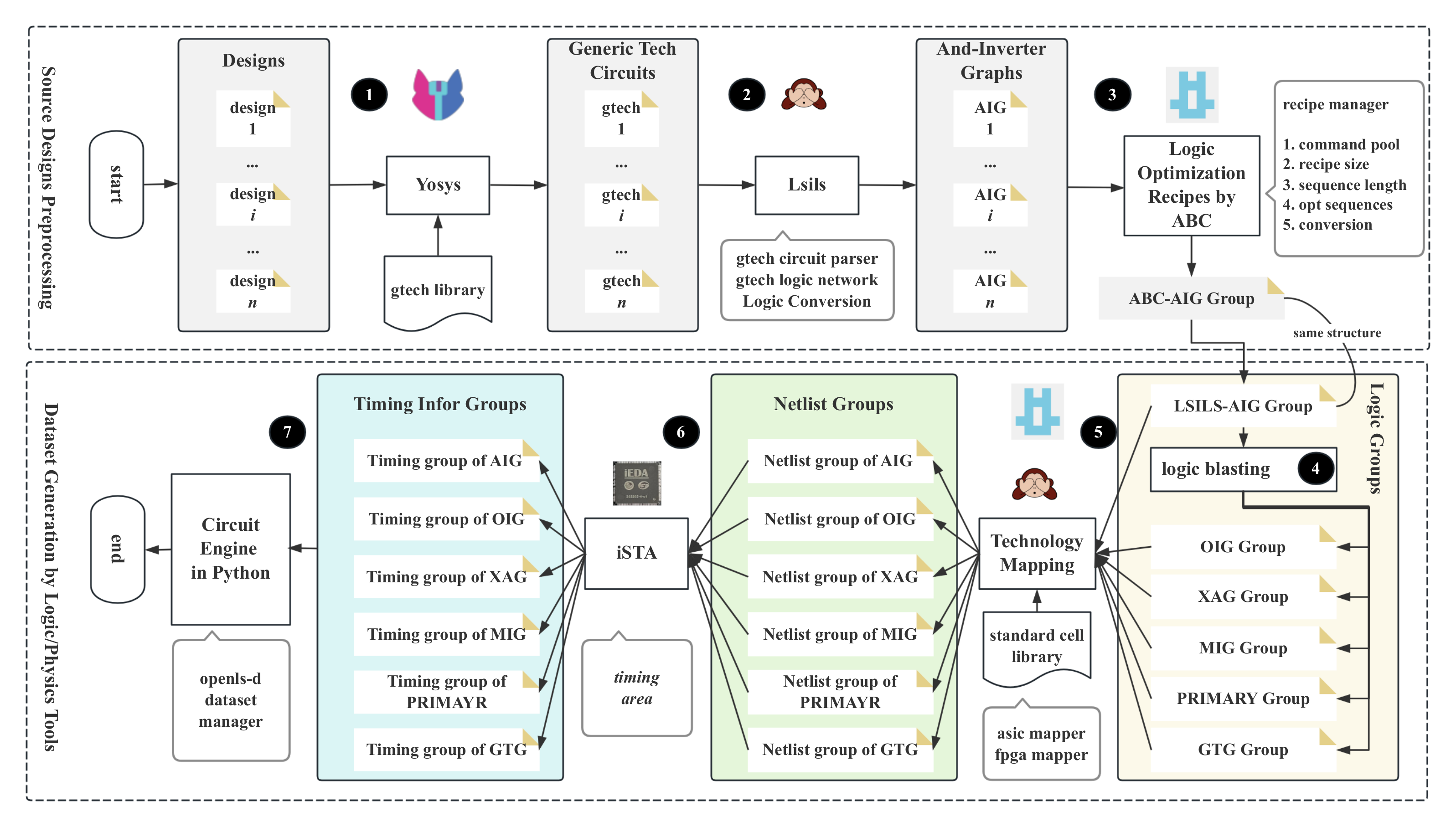}
\vspace{-0.5cm}
    \caption{The adaptive logic synthesis dataset generation framework of OpenLS-DGF.}
    \label{fig:flow}
\vspace{-0.5cm}
\end{figure*}

To address this gap, we introduce OpenLS-DGF, the logic synthesis dataset generation framework designed for various downstream tasks. 
OpenLS-DGF not only accommodates multiple tasks but also enables these tasks to share a common dataset. 
This approach standardizes the measurement of task performance, promoting consistency across different evaluations and enhancing comparability within the field.


\section{OpenLS-DGF}
\label{sec:flow}

In this section, we introduce the proposed OpenLS-DGF along with the circuit engine.

\subsection{Overview}
\cref{fig:flow} illustrates the dataset generation flow of OpenLS-DGF.
It covers the three fundamental steps in logic synthesis: Boolean representation, logic synthesis, and technology mapping.
To streamline the process, all processes are integrated into the open-source platform, LogicFactory~\cite{LogicFactory}, utilizing the TCL command environment.
This framework involves 7 distinct steps, starting from the initial design input to the final dataset packaging.
The first three steps~(1-3) involve preprocessing the input design to generate the generic technology circuit and its optimized AIGs.
The subsequent steps (4-6) are dedicated to producing intermediate Boolean circuits derived from these optimized AIGs, including logic blasting, technology mapping, and physical design.
The final step~(7) packages these Boolean circuits into PyTorch format data using a circuit engine, which facilitates efficient dataset management.
This systematic methodology ensures that the design inputs are processed and transformed into a comprehensive dataset, ready for various logic synthesis applications.
Further details will be demonstrated in the subsequent sections.

\subsection{Dataset Generation Steps}
\label{sec:flow:steps}
\paragraph{Step 1: Generic Technology Circuit Synthesis.}
The first step involves synthesizing the generic technology circuit~(GTG).
Given that the source designs are provided in various formats such as Verilog, AIG, and BLIF, adopting GTG as the standardized representation for these inputs is crucial.
This approach ensures uniform processing across different design types, facilitating more streamlined and consistent handling in subsequent stages of logic synthesis.
We utilize Yosys~\cite{Yosys} served as the frontend parser for these format designs, consequently, the input source designs are translated to GTG using Yosys' ``techmap'' method.
The generic technology cells defined in GTG correspond to the basic functionally complete set of RTL intermediate language~(RTLIL) within Yosys.
Besides the primitive gates, GTG includes several complex gates such as NAND3, MUX21 AOI21, and OAI21, which preserve the coarse-grained attributes similarities of the RTLIL.

Notably, similar generic technology cells are also utilized in commercial logic synthesis tools such as Design Compiler~(DC) for their intermediate representations, which highlights the practical relevance of creating GTG for the input designs.
We document the GTG in both Verilog and GraphML formats, facilitating further processing and exploration in subsequent processes.

\paragraph{Step 2: And-Inverter Graph Generation.}
AIGs, composed solely of AND2 and INVERTER gates, are fundamental to most logic optimization techniques in logic synthesis due to their simplicity and structural directness.
Open-source tools such as ABC~\cite{ABC} and LSILS~\cite{lsils} have implemented numerous logic optimization algorithms on AIG, including \textit{rewrite}, \textit{balance}, \textit{refactor}, \textit{resubstitution}, etc.
To enable these optimizations, we convert the GTG generated in Step 1 into AIG. 
This conversion leverages the GTG parser and conversion method provided by LogicFactory.
Additionally, we document each AIG in binary format alongside its corresponding Verilog and GraphML files.

\paragraph{Step 3: Logic Optimization Recipes.}
Logic optimization aims to minimize the cost of Boolean circuits.
It also generates different structural Boolean circuits with different optimization configurations~(the optimization sequence).
These Boolean circuit' variants can substantially influence the quality of results (QoR) during technology mapping and subsequent physical design processes.
In this step, we utilize ABC to generate diverse structural Boolean circuits from a specific design's AIG.
These Boolean circuits facilitate the exploration of the QoR distribution for a given design, which is crucial for the tasks related to QoR distribution.

We utilize a comprehensive set of optimization commands frequently employed in logic optimization:
\begin{equation}
\nonumber
\left\{
\begin{aligned}
&\textit{balance}, \\
&\textit{rewrite}, \textit{rewrite -l}, \textit{rewrite -z}, \textit{rewrite -l -z}, \\
&\textit{refactor}, \textit{refactor -l}, \textit{refactor -z}, \textit{refactor -l -z}, \\
&\textit{resub}, \textit{resub -l}, \textit{resub -z}, \textit{resub -l -z}.
\end{aligned}
\right.
\end{equation}
Additionally, the heuristic optimization sequence such as \textit{resyn}, \textit{resyn2}, along with sequence exploration tasks like BOILS~\cite{BOILS} and DRILLS~\cite{DRILLS}, are performed based on the above command pool.
We generate 1000 distinct optimization sequences for each input design, with each sequence randomly composed of 10 commands from the command pool.
To ensure equal selection probability, the \textit{balance} command is repeated, appearing four times in the command pool.
These 1000 distinct optimization sequences facilitate the exploration of the different sequences on the same design as well as the same sequence on different designs, enhancing the analysis of learning of their impact on design optimization.
Following this step, we generated 1000 variant AIGs for each design, with each AIG indexed according to its corresponding optimization sequence.

\paragraph{Step 4: Logic Blasting.}
Logic blasting is a process designed to transform Boolean networks into various formats.
\cref{tab:fcs} shows the 6 logic types of Boolean network, including AIG, OIG, XAG, MIG, PRIMARY, and GTG.
Despite the relative scarcity of optimization algorithms for these circuit types compared to AIG, logic blasting provides an avenue to potentially generate superior gate-level netlists through technology mapping for other logic types.

In this step, we utilize the LSILS tool to execute the logic blasting.
Initially, the AIG groups generated by ABC are translated into corresponding AIG groups using the LSILS tool.
Both AIG groups maintain identical structures for each corresponding item.
Subsequently, each AIG in the LSILS AIG groups is converted into other logic types through the logic blasting method, which covers the AIG by the specific standard cell library.
For example, the PRIMARY circuit consists of primitive gates \{NOT, AND2, NAND2, OR2, NOR2, XOR2, XNOR2\}.
By utilizing standard cells composed of these gates, we can generate the PRIMARY circuit of the corresponding AIG.
The gates with a larger area have a higher priority during the covering process.

The coverage of AIG, OIG, XAG, PRIMARY, and GTG are capable of generating the necessary supergate library during the covering algorithm by technology mapping.
However, the functionally complete set $S^{MIG}$ of the MIG circuit, \{NOT, MAJ3\}, are inadequate for generating a functionally complete supergate library due to the hardness of generating basic \{``and2'', ``inverter''\} set, which makes it complicated to cover the AIG to MIG by technology mapping.
Instead, we employ a topological node-wise conversion method to achieve this conversion.

\begin{theorem}
\label{thm:blasting}
The logic blasting method preserves the dependency relationships of the original circuit.
\end{theorem}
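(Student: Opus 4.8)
The plan is to first make precise what ``dependency relationship'' means, then reduce the claim to a statement about local, interface-preserving substitutions, and finally close it by an induction on the topological order of the circuit. For a Boolean circuit $\mathcal{C}=(\mathcal{V},\mathcal{E})$, I would say that a signal $v_j$ \emph{depends on} $v_i$ when there is a directed path $v_i \to \cdots \to v_j$ in $\mathcal{E}$; since $\mathcal{C}$ is acyclic, this reachability relation is a partial order, and the dependency relationship is exactly this relation restricted to the interface signals (the PIs, POs, and the fanin/fanout boundaries of each converted region). The goal then becomes showing that blasting induces a map on signals under which this relation is preserved on all surviving original signals, while every freshly introduced internal signal depends only on the fanins of the region it helps realize.

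Next I would observe that both conversion mechanisms of Step~4 are \emph{local substitutions that preserve the boundary interface}. In the covering-based case (AIG, OIG, XAG, PRIMARY, GTG), a cut of the AIG is covered by a single library cell whose output pin is wired to the cut's original output signal and whose input pins are wired to the cut's leaves, so the boundary fanins and the fanout of the region are unchanged. In the node-wise case (MIG), each AIG node is replaced by a MAJ3/inverter gadget whose inputs are the node's original fanins (with the spare MAJ3 input tied to a constant) and whose output drives the node's original fanouts. In either case the substitution only rewires the interior of a region and never connects one region's interior to another region's interior.

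Then I would run the induction, processing regions in topological order and proving two inclusions. For the forward direction, any original dependency $v_i \to \cdots \to v_j$ factors through a chain of region boundaries $v_i = s_0, s_1, \ldots, s_m = v_j$; it suffices that inside each gadget the output reaches each of its used inputs, which holds because every gate in \cref{tab:cell} is structurally sensitive to all its listed arguments, so the whole chain lifts to a path in the blasted circuit. For the converse, any path in the blasted circuit between two surviving signals can be cut at region boundaries and each interior segment collapsed to the corresponding original connection, recovering a path in $\mathcal{C}$; hence no new dependency between original signals is created, and each new interior node inherits only the region's boundary fanins among its ancestors.

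I expect the main obstacle to be the covering case in which a single cell absorbs several AIG nodes, since there the one-to-one node correspondence breaks and I must argue at the level of cut boundaries that the structural support of the covering cell equals the union of the supports of the cut leaves. A secondary subtlety is the spare MAJ3 input in the MIG gadget: I would need to verify that tying it to a constant (or to an already-present fanin) introduces no dependency on any other primary input, so that the structural support of each converted node is neither enlarged nor shrunk.
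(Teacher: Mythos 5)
Your proposal is correct and rests on the same underlying idea as the paper's proof: the blasting is driven by the mapping/covering step, which establishes a correspondence between circuits before and after conversion, and that correspondence preserves paths. The difference is one of rigor rather than of strategy. The paper's argument is two sentences long --- it asserts that nodes ``can be precisely matched'' and therefore the topological structure (hence the dependency relation) is retained, with a one-line remark that MAJ3 can realize AND to cover the MIG case. You correctly observe that this literal node-to-node matching does not actually hold for the covering-based conversions, where a single library cell absorbs an entire cut of AIG nodes; your move to a many-to-one correspondence at cut boundaries, together with the explicit two-inclusion argument (lifting original paths into the blasted circuit, and collapsing blasted paths back through region interiors), closes a gap the paper glosses over. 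Your induction on topological order and your attention to the spare MAJ3 input (verifying that tying it to a constant neither enlarges nor shrinks the structural support) are likewise refinements the paper does not spell out. What the paper's version buys is brevity appropriate to an engineering claim; what yours buys is an actual proof that the reachability relation on surviving signals is preserved in both directions, which is the honest content of \cref{thm:blasting}. The only caution is that your forward direction leans on every gate in \cref{tab:cell} being structurally sensitive to all listed arguments --- that is true here, but it is an assumption about the cell library rather than about the blasting procedure itself, and is worth stating as such.
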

\begin{proof}
The logic blasting step relies on the mapping step, ensuring that nodes between the circuits, both before and after the blasting, can be precisely matched. 
Thus, they retain the same topological structure.
The matched nodes preserve the dependency relationships.
Additionally, each MAJ3 gate can represent an AND gate, and it is still feasible to meet \cref{thm:blasting}.
\end{proof}

\begin{lemma}
We can equip the logic optimization capability of AIG to the other logic types through logic blasting.
\end{lemma}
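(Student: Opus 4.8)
The plan is to prove the lemma by composing two operations and invoking \cref{thm:blasting} to carry the benefit across. First I would fix an arbitrary target logic type $T \in \{\text{OIG}, \text{XAG}, \text{MIG}, \text{PRIMARY}, \text{GTG}\}$ and an arbitrary AIG $\mathcal{A}$. Since AIG admits the full optimization command pool of Steps~2--3 (\textit{rewrite}, \textit{balance}, \textit{refactor}, \textit{resub}, and their variants), I can apply any such sequence $O$ to obtain a functionally equivalent but structurally optimized AIG $\mathcal{A}' = O(\mathcal{A})$. The logic blasting map $B$ then converts $\mathcal{A}'$ into a circuit $B(\mathcal{A}')$ of type $T$, so the claimed transfer of optimization capability is realized by the composition $B \circ O$, which requires no native optimizer for $T$.

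The core of the argument is to show that the quality gained in the AIG domain is not lost by blasting. Here I would invoke \cref{thm:blasting}: because logic blasting preserves the dependency relationships and hence the topological structure, there is a faithful node-wise correspondence between $\mathcal{A}'$ and $B(\mathcal{A}')$. Consequently the structural reductions produced by $O$ on $\mathcal{A}'$ (fewer nodes, shorter paths) are reflected in $B(\mathcal{A}')$, so an optimized AIG yields a correspondingly optimized circuit of type $T$. This is precisely the sense in which type $T$ inherits the optimization capability of AIG.

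I would treat the MIG case separately, since the excerpt notes that MIG cannot be generated by the mapping-based blasting and instead relies on a topological node-wise conversion. For this I would reuse the observation from the proof of \cref{thm:blasting} that each MAJ3 gate can emulate an AND2; the node-wise conversion is therefore itself dependency-preserving, and the same composition argument $B \circ O$ goes through unchanged.

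The hard part will be making rigorous the claim that an optimized AIG implies an optimized type-$T$ circuit. \cref{thm:blasting} guarantees that topology is preserved, but not that a smaller or shallower AIG strictly produces a smaller or faster target circuit, since blasting may expand individual nodes into small gadgets. I expect the intended argument to be qualitative: because the per-node expansion of $B$ is bounded and locally structure-respecting, a reduction in the AIG propagates monotonically to type $T$. Establishing this bounded, structure-respecting property of the blasting map is the main obstacle; once it is granted, the lemma follows immediately from the composition together with \cref{thm:blasting}.
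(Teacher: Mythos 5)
Your proposal takes essentially the same route as the paper: the paper's justification is exactly the composition of AIG-domain optimization with blasting, invoking \cref{thm:blasting} to argue that the structurally ``similar'' blasted circuits inherit the optimization, so the capability extends to the other logic types. The gap you flag at the end --- that topology preservation does not by itself guarantee a monotone transfer of QoR, since blasting expands nodes into gadgets --- is real, but the paper's own argument leaves it equally unaddressed and settles for the same qualitative claim.
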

Since most logic optimization algorithms are implemented on AIGs. 
According to \cref{thm:blasting}, we can generate similar Boolean circuits by logic blasting on AIGs.
In this manner, the logic optimization capabilities of AIG are extended to other types of Boolean circuits.
Following this step, we are able to generate corresponding groups of AIG, OIG, XAG, MIG, PRIMARY, and GTG Boolean circuits.
All these Boolean circuits are written in Verilog and GraphML file formats.

\paragraph{Step 5: Technology Mapping.}
For each of the 6 Boolean network groups, we generated a gate-level netlist using the same technology mapping algorithm provided by LSILS~\cite{lsils}, specifically through its ``mapper\_asic'' and ``mapper\_fpga'' methods, which are based on their respective template operation.
For ASIC technology mapping, we employed the sky130~\cite{skywater130} standard cell library. 
Similarly, FPGA technology mapping was constrained to the LUT6 cell configuration.
Given that certain tasks are exclusively relevant to ABC AIG, we will also employ ABC's technology mapping algorithms for these specific instances.
For ASIC technology mapping, we use the ``amap'' command, and for FPGA technology mapping, we apply the ``if -K 6'' command to accommodate the requirements.
All resulting gate-level netlists, whether for ASIC or FPGA, are subsequently saved in both Verilog and GraphML file formats to ensure broad compatibility and facilitate downstream applications.

\paragraph{Step 6: Static Timing Analysis.}
The primary goal of logic synthesis is to produce a better gate-level netlist that enhances the subsequent physical design steps.
Assessing the performance of the existing Boolean circuit is crucial, as the timing information significantly influences the gate-level netlist's performance and serves as a key metric for evaluating the results of logic synthesis.
In this step, we utilize the static timing analysis~(STA) tool provided by the open-source physical design tool, iEDA~\cite{iEDA}, to compute the arrival time information for specific ASIC gate-level netlists.
For FPGA netlists, the depth of the circuit provides a precise indicator for timing evaluation, offering a dependable metric for assessing the performance of the output LUT netlist.
All acquired timing information is documented in JSON format, ensuring that it is accessible for further analysis.

\paragraph{Step 7: Dataset Packing}

\begin{figure}[tb]
    \centering
    \includegraphics[width=1\linewidth]{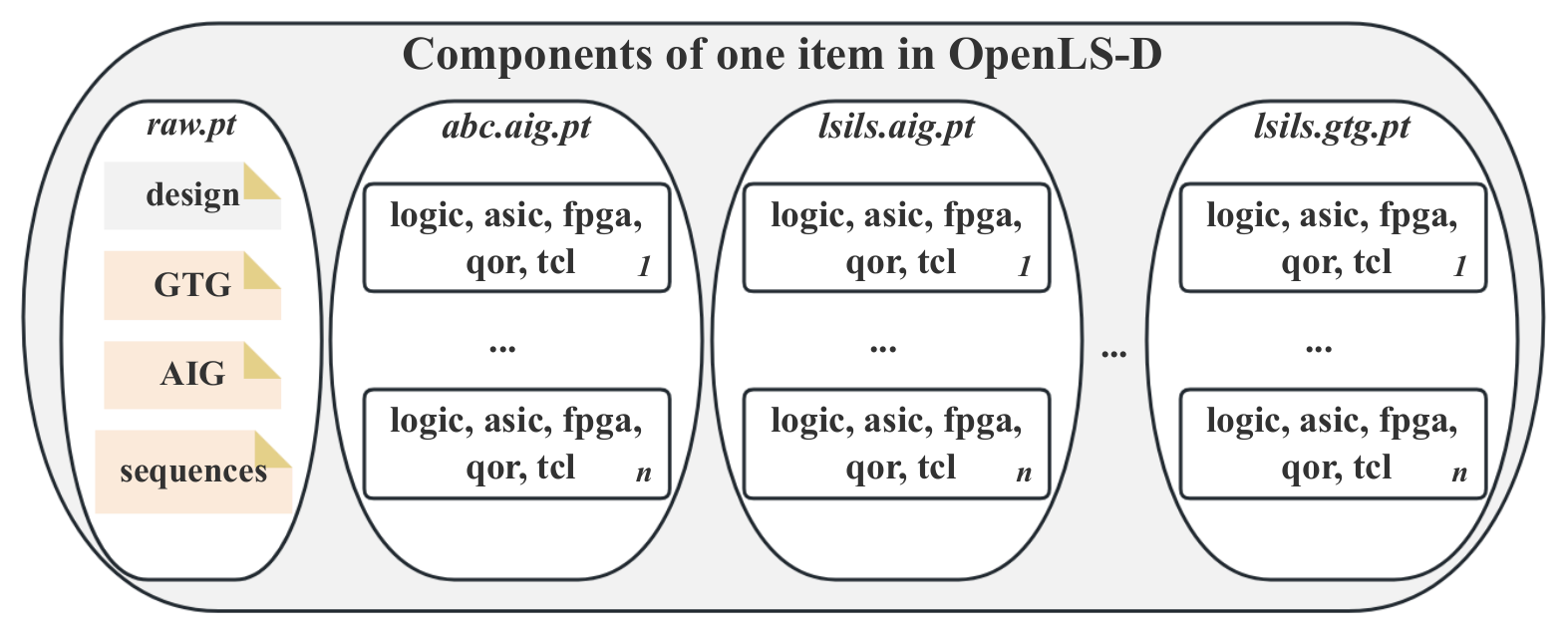}
\vspace{-0.5cm}
    \caption{Components of an item in OpenLS-D-v1. For convenience, the logic types \{OIG, XAG, MIG, PRIMARY\} are abbreviated with ellipses.}
    \label{fig:component}
\vspace{-0.5cm}
\end{figure}

To streamline the management of generated raw files, these files are organized and translated into the PyTorch files by different designs.
This organization is executed based on the circuit engine, which will be discussed further in~\cref{sec:flow:circuit}.

\cref{fig:component} illustrates the components of an individual item in the assembled dataset.
Each item is segmented into 8 PyTorch files: ``\textit{raw.pt}'', ``\textit{abc.aig.pt}'', ``\textit{lsils.aig.pt}'', ``\textit{lsils.oig.pt}'', ``\textit{lsils.xag.pt}'', ``\textit{lsils.mig.pt}'', ``\textit{lsils.primary.pt}'', and ``\textit{lsils.gtg.pt}''. 
The ``\textit{raw.pt}'' file consists of 4 files: the source design, the transformed GTG circuit, the converted AIG circuit, and a fixed set of 1000 optimization sequences.
The ``\textit{abc.aig.pt}'' file contains optimized AIGs generated using the ABC tool with the generated optimization sequence.
Additionally, the ASIC/FPGA gate-level netlists and their corresponding QoR~(timing) are also stored.
The accompanying ``tcl'' file aids in reproducing the respective intermediate files for further processing.
Files such as ``\textit{lsils.aig.pt}'', ``\textit{lsils.oig.pt}'', ``\textit{lsils.xag.pt}'', ``\textit{lsils.mig.pt}'', ``\textit{lsils.primary.pt}'' and ``\textit{lsils.gtg.pt}'' share similar components with the ``\textit{abc.aig.pt}'' file.
However, the Boolean circuits and the technology mapping algorithms they utilize are specifically based on the LSILS tool.

All generated raw files undergo verification using combinational equivalence-checking tools. 
The files within the ``\textit{raw.pt}'' can be checked using Yosys, while the files in the ``\textit{abc.aig.pt}'' are checked by comparing the AIG circuits and their corresponding gate-level netlists through ABC.
Similarly, the remaining files generated by LSILS are checked against their corresponding gate-level netlists in ``\textit{abc.aig.pt}''.

This structured approach to dataset management avoids the creation of excessively large or numerous PyTorch files for any single design.
It allows for selective loading of task-related PyTorch files to derive sub-datasets as needed, enhancing efficiency.
The flexibility of this framework allows for the regeneration of necessary files as required, either before or after certain steps. 
Additionally, the process can be tailored by inserting appropriate steps between the established ones, thus customizing the flow to better meet specific requirements.

\subsection{Circuit Engine}
\label{sec:flow:circuit}

\begin{figure}[t]
    \centering
    \includegraphics[width=1\linewidth]{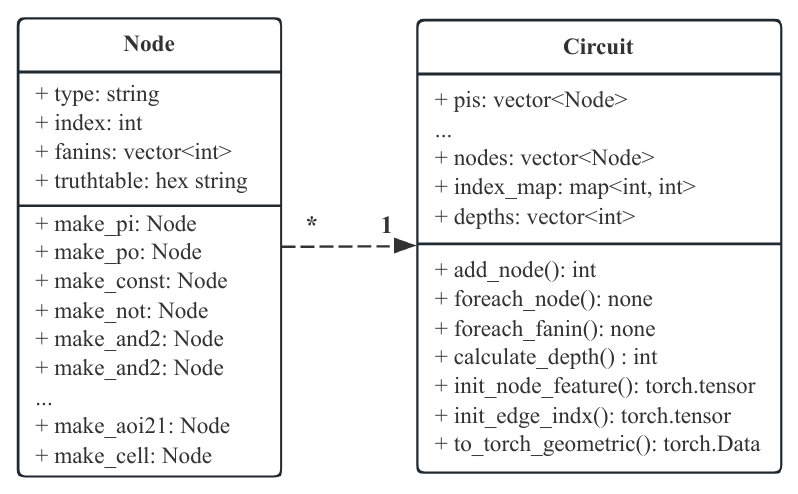}
\vspace{-0.5cm}
    \caption{The UML class diagram of the generic circuit class.}
    \label{fig:circuit}
\vspace{-0.2cm}
\end{figure}

\begin{figure}[t]
    \centering
    \includegraphics[width=1\linewidth]{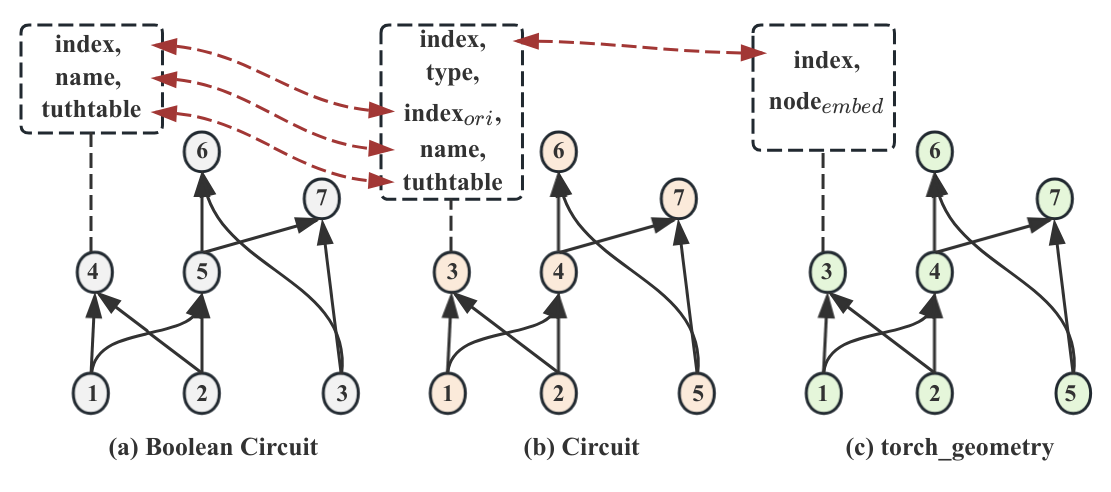}
\vspace{-0.5cm}
    \caption{Node correspondence between the three types of graph: Boolean circuit, Circuit, and torch\_geometry.}
    \label{fig:node_corr}
\vspace{-0.2cm}
\end{figure}

As mentioned in step~(7) in \cref{sec:flow:steps}, the raw files generated are subsequently packaged into the dataset through the proposed circuit engine.
It is primarily comprised of two main parts: the ``Circuit'' class, and operations for ``Circuit''.
\cref{fig:circuit} illustrates the UML diagram of the ``Circuit'' class within the circuit engine.
Each node in the Circuit class has 5 attributes: \textit{type}, \textit{name}, \textit{index}, \textit{fanins}, and \textit{truth table}.
For logic type circuits as listed in \cref{tab:fcs}, the node's type and name correspond directly to their respective gate names such as ``AND2'', ``AOI21'';
whereas for ASIC/FPGA gate-level netlists, the node type is designated as ``CELL'', with the node name corresponding to the matched standard cell name.
The truth table is standardized to 64 bits to accommodate the function of each gate.
Operations on the ``Circuit'' class include operators like ``\textit{to\_torch\_geometry(circuit: Circuit)}'', ``\textit{simulate(circuit: Circuit)}'', etc.
The ``\textit{to\_torch\_geometry(circuit: Circuit)}'' operator ensures consistency in node indices with the Circuit class, facilitating various operations that leverage the ``\textit{torch\_geometry}~\cite{torch_geometry}`` class.
Meanwhile, the ``\textit{simulate(circuit: Circuit)}'' operator allows for the simulation of the circuit using a range of input test cases, thereby providing valuable insights into the circuit's behavior.

The aforementioned Boolean circuits are stored in two formats: Verilog and GraphML.
We use the ``\textit{load\_graphml(path:str)$\to$Circuit}'' operator to load the generated GraphML files into the ``Circuit'' class.
The GraphML files are firstly loaded into a NetworkX~\cite{networkx} graph, and the circuit is constructed by traversing the nodes and edges within this graph.
Since the nodes and edges in the NetworkX graph are stored separately, we establish connected signals between nodes using the ``\textit{add\_fain(fanins)}'' function of the Circuit class during the edge traversal after all nodes have been created.
Moreover, users can easily introduce custom operations within the ``Circuit'' class to meet specific requirements, further utilizing this circuit engine.

\cref{fig:node_corr} illustrates the node correspondence among the three types of graphs: the Boolean circuits, the proposed ``Circuit'', and the ``torch\_geometry'' graph.
In this framework, the Circuit'' graph acts as a pivotal bridge, maintaining the original index ($index_{ori}$) from the input Boolean circuit while assigning a new index ($index$) for internal management. The ``\textit{to\_torch\_geometry(circuit: Circuit)}'' operator ensures that the transformed \textit{torch\_geometry} graph retains the same node indices as the Circuit'' class. 
This design enables smooth interaction between the Boolean circuit structure and the ML-friendly \textit{torch\_geometry} graph, allowing for the efficient transition of features and operations between the two.
Understanding the correspondence between these three types of graphs allows us to leverage their collective strengths to create a dataset with expanded possibilities and enhanced functionality.

\section{OpenLS-D-v1}
\label{sec:dataset}

\begin{table}[t]
\centering
\setlength{\tabcolsep}{5pt}
\scriptsize
\caption{Characteristics of the collected designs.}
\begin{tabular}{l|r|r|r|r|r|r}
\toprule
\textbf{Design} & \textbf{\#PI} & \textbf{\#PO} & \textbf{\#And} & \textbf{\#Inv} & \textbf{\#Edge} & \textbf{Depth} \\
\midrule
\textit{adder} & 256 & 129 & 1274 & 1781 & 5226 & 508 \\
\textit{square} & 64 & 128 & 19499 & 24096 & 78151 & 445 \\
\textit{div} & 128 & 128 & 27100 & 37726 & 108555 & 8406 \\
\textit{multiplier} & 128 & 128 & 27753 & 31205 & 111242 & 524 \\
\textit{max} & 512 & 130 & 3021 & 4021 & 12341 & 324 \\
\textit{log2} & 32 & 32 & 32382 & 36027 & 129590 & 597 \\
\textit{sqrt} & 128 & 64 & 32599 & 45647 & 130518 & 10384 \\
\textit{sin} & 24 & 25 & 6604 & 7223 & 26446 & 273 \\
\textit{bar} & 135 & 128 & 2891 & 3468 & 11820 & 26 \\
\textit{cavlc} & 10 & 11 & 652 & 762 & 2623 & 26 \\
\textit{int2float} & 11 & 7 & 208 & 242 & 845 & 23 \\
\textit{i2c} & 177 & 128 & 994 & 1020 & 4148 & 23 \\
\textit{priority} & 128 & 8 & 670 & 845 & 2696 & 384 \\
\textit{voter} & 1001 & 1 & 10528 & 14208 & 42114 & 113 \\
\textit{arbiter} & 256 & 129 & 11923 & 12173 & 47822 & 175 \\
\textit{router} & 60 & 30 & 184 & 173 & 795 & 36 \\
\textit{ctrl} & 7 & 26 & 112 & 119 & 483 & 11 \\
\textit{mem\_ctrl} & 1187 & 962 & 10001 & 10323 & 41691 & 58 \\
\textit{ac97\_ctrl} & 2339 & 2137 & 11129 & 13786 & 48414 & 23 \\
\textit{steppermotordrive} & 28 & 27 & 133 & 145 & 567 & 22 \\
\textit{ss\_pcm} & 104 & 90 & 399 & 476 & 1722 & 14 \\
\textit{usb\_phy} & 132 & 90 & 438 & 465 & 1884 & 17 \\
\textit{sasc} & 135 & 125 & 602 & 770 & 2637 & 16 \\
\textit{simple\_spi} & 164 & 132 & 826 & 989 & 3534 & 21 \\
\textit{spi} & 254 & 238 & 3466 & 3672 & 14242 & 55 \\
\textit{wb\_conmax} & 2122 & 2075 & 45354 & 32111 & 185516 & 32 \\
\textit{wb\_dma} & 828 & 702 & 3644 & 4281 & 15742 & 41 \\
\textit{fir} & 410 & 351 & 4134 & 5628 & 17022 & 159 \\
\textit{des3\_area} & 303 & 64 & 4862 & 4147 & 19544 & 49 \\
\textit{iir} & 494 & 441 & 6302 & 8777 & 25813 & 193 \\
\textit{systemcaes} & 927 & 672 & 9961 & 12880 & 41072 & 74 \\
\textit{systemcdes} & 247 & 128 & 2636 & 3109 & 10728 & 46 \\
\textit{usb\_funct} & 1748 & 1556 & 13098 & 14481 & 54800 & 58 \\
\textit{sha256} & 1943 & 1042 & 14677 & 16283 & 59752 & 198 \\
\textit{dynamic}\_node & 2708 & 2575 & 17402 & 22563 & 74297 & 57 \\
\textit{fpu} & 632 & 409 & 27345 & 34160 & 110018 & 1938 \\
\textit{aes} & 683 & 529 & 28655 & 22518 & 115418 & 44 \\
\textit{aes\_secworks} & 3087 & 2604 & 33953 & 36169 & 140730 & 71 \\
\textit{aes\_xcrypt} & 1975 & 1805 & 50426 & 43280 & 205032 & 79 \\
\textit{tinyRocket} & 4561 & 4181 & 41800 & 50326 & 174934 & 157 \\
\textit{tv80} & 636 & 361 & 9066 & 9333 & 36912 & 109 \\
\textit{ethernet} & 10731 & 10422 & 65509 & 84899 & 282487 & 55 \\
\textit{picosoc} & 11302 & 10797 & 71472 & 82817 & 306788 & 133 \\
\textit{bp\_be} & 11592 & 8413 & 75576 & 97621 & 317662 & 276 \\
\textit{vga\_lcd} & 17322 & 17063 & 103913 & 133019 & 449234 & 41 \\
\textit{jpeg} & 4962 & 4789 & 119908 & 152637 & 488364 & 98 \\
\midrule
\textbf{AVE} &  1882 &  1652 & 20762  & 24400  & 86129  & 574  \\
\bottomrule
\end{tabular}
\label{tab:designs}
\vspace{-0.5cm}
\end{table}

In this section, we will provide a detailed overview of the structure and characteristics of the OpenLS-D-v1 dataset.

\subsection{Data Source}
\subsubsection{Design Selection}
\cref{tab:designs} shows the source designs for OpenLS-D-v1 dataset generation.
These designs are selected from established benchmarks, like IWLS2005~\cite{IWLS2005}, IWLS2015~\cite{IWLS2015}, and OpenCores~\cite{opencores}.
All the presented designs are combinational AIG, with the number of Primary Inputs (PIs) ranging from 7 to 17322, Primary Outputs (POs) from 1 to 17063, AND gates from 112 to 119908, Inverter gates from 119 to 152637, Edge signals from 483 to 488364, and depths from 11 to 10384.

It contains diverse types of designs, including arithmetic circuits, control circuits, and IP cores, making it a comprehensive resource for testing and benchmarking logic synthesis algorithms.
In addition, new designs or internal steps can be added to update the generated dataset for specific demands incrementally.

\subsubsection{Designs Diversity}
\begin{figure}[t]
    \centering
    \includegraphics[width=1\linewidth]{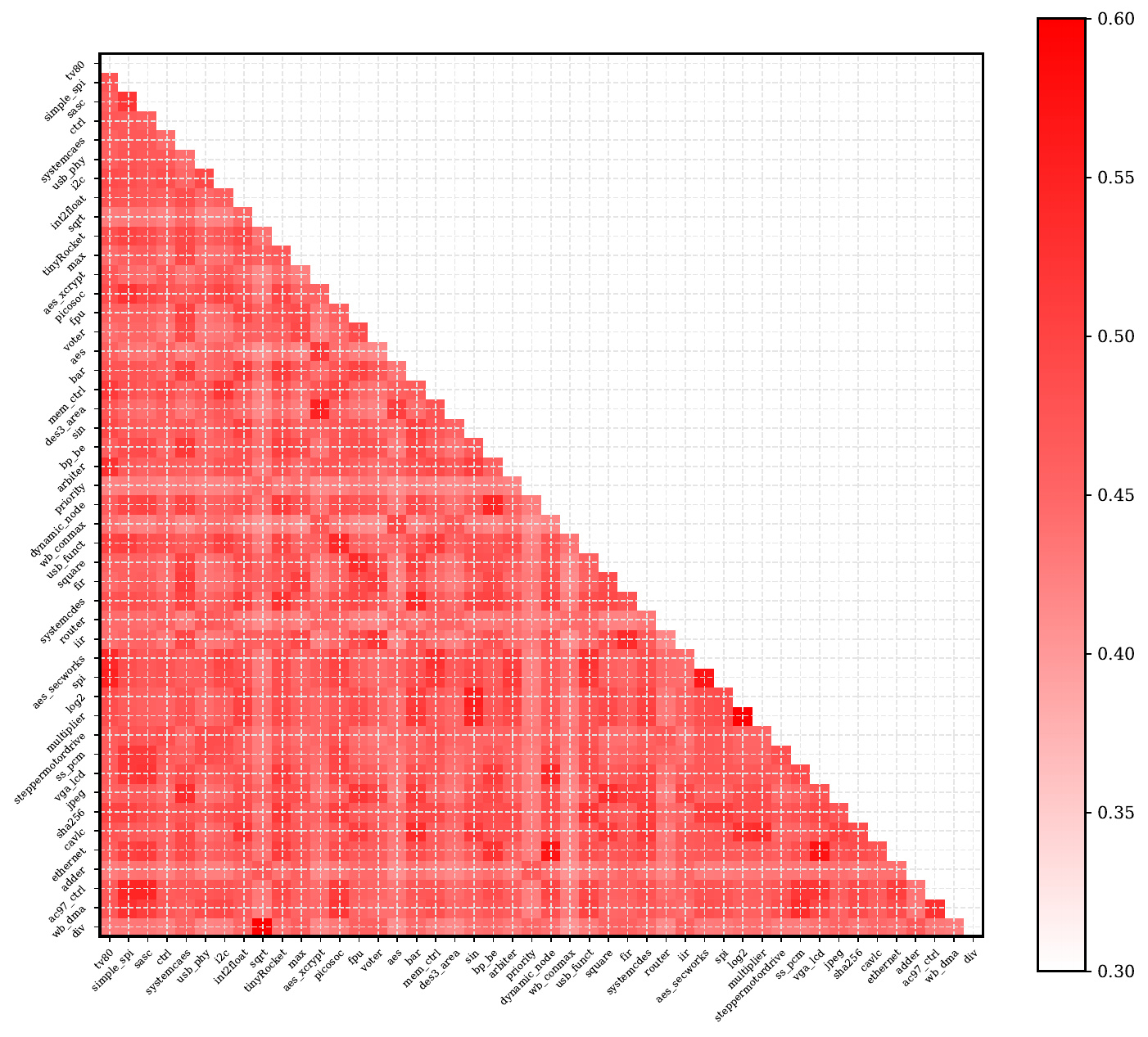}
\vspace{-0.5cm}
    \caption{The graph embedding similarity of the source designs.}
    \label{fig:dataset:similarity}
\vspace{-0.2cm}
\end{figure}

\cref{fig:dataset:similarity} presents the cosine similarity for the source designs as listed in \cref{tab:designs}.
Each design is represented by the graph embedding computed using a combination of the heuristic features and features aggregated from Graph2Vec~\cite{graph2vec}:
\begin{equation}
\nonumber
\begin{aligned}
feature_1 =&~ [pis, pos, ands, invs, edges, depth], \\
feature_2 =&~ \text{graph2vec}(circuit, dimension=128), \\
feature   =&~ \text{concatenate}(feature_1, feature_2).
\end{aligned}
\end{equation}
The heuristic features provide a coarse-grained view of the design, while Graph2Vec provides a deeper insight into the internal structural intricacies.
The combination of these features yields a robust graph embedding for each design.

Cosine similarity calculations are conducted using the ``\textit{sklearn.metrics.pairwise.cosine\_similarity}'' function.
To enhance the visualization of this correlation, the similarity scores ranging from [-1, 1] are normalized to [0.3, 0.6].
Additionally, the diagonal and the top-right corner of the matrix are cleared to eliminate redundancy.
The average cosine similarity score is around 0.44.
This visualization distinctly highlights the variability across different design embeddings.

\subsubsection{Dataset Generation}

\begin{table}[t]
\setlength{\tabcolsep}{1.2pt}
\centering
\caption{The Characteristics comparison between the OpenABC-D~\cite{openabc-d} and OpenLS-D-v1 Datasets.}
\begin{tabular}{c|c|c}
\toprule
\diagbox{Charac}{Dataset}  & OpenABC-D& OpenLS-D-v1~(ours) \\
\midrule
\textbf{source}                 &     OpenCore、IWLS & OpenCore、IWLS、EPFL \\
\midrule
\#\textbf{designs}              &     29            & 46 \\
\midrule
\textbf{raw AIG}                &   $\surd$         & $\surd$ \\
\textbf{raw GTech}              &   $\times$    & $\surd$ \\
\#\textbf{recipes}              &     1500          & 1000 \\
\#\textbf{sequence length}      &     20            & 10   \\
\#$\textbf{AIGs}^{abc}$$\slash$design        &     \textbf{\textit{30000~(1500$\times$20)}}        & 1000 \\
\#$\textbf{AIGs}^{lsils}$$\slash$design      &     $\times$                                        & 1000 \\
\#$\textbf{OIGs}^{lsils}$$\slash$design        &     $\times$      & 1000 \\
\#$\textbf{XAGs}^{lsils}$$\slash$design        &     $\times$      & 1000 \\
\#$\textbf{MIGs}^{lsils}$$\slash$design        &     $\times$      & 1000 \\
\#$\textbf{PRIMARYs}^{lsils}$$\slash$design    &     $\times$      & 1000 \\
\#$\textbf{GTGs}^{lsils}$$\slash$design        &     $\times$      & 1000 \\
\#\textbf{ASIC netlist}$\slash$design       &     $\times$      & 7000 \\
\#\textbf{FPGA netlist}$\slash$design       &     $\times$      & 7000 \\
\midrule
\#\textbf{Total circuits}       &     870k~(30k$\times$29)    & 966k~(21k$\times$46) \\
\bottomrule
\end{tabular}
\label{tab:dataset:comparison}
\vspace{-0.5cm}
\end{table}

\cref{tab:dataset:comparison} provides a detailed comparison between the OpenABC-D and OpenLS-D-v1 datasets. 
The OpenLS-D-v1 dataset includes over 966,000 Boolean circuits, structured into groups where each consists of 21,000 circuits generated from 1,000 distinct synthesis recipes. 
This collection includes 7,000 circuits across 7 Boolean network types, along with 7,000 ASIC and 7,000 FPGA netlists.

To optimize storage efficiency, we employed the ``zstandard'' compression tool~\cite{zstandard}, which significantly reduced the storage footprint. 
The entire dataset generation process, including compression, was executed over approximately 76 hours for raw file generation and 65 hours for compression, using 32 threads on an Intel Xeon Platinum 8380 CPU with a 16T SEAGATE EXOS HDD.
The raw files occupy about 410 GB, while the generated PyTorch files take up about 700 GB.

In comparison, although the OpenABC-D dataset involves generating a larger number of AIGs per design (30,000) from 1,500 recipes, this often results in redundancy within the generated AIGs~(due to the same optimization sequence by sub-sequence extraction).
Conversely, OpenLS-D-v1 adopts a more diverse approach by generating different types of Boolean networks for each design using varied logic types and synthesis sequences.
This method provides a richer and more distinct dataset, better suited for various machine learning applications.

\subsection{Dataset Characteristics}

\begin{figure*}[t]
\centering
\subfigure[\textit{cavlc}]{
\includegraphics[width=0.23\linewidth]{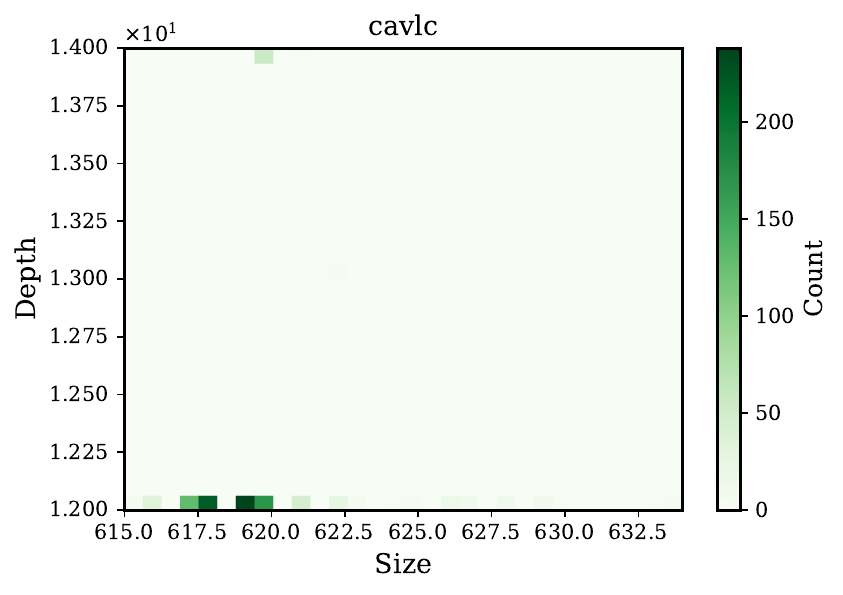}
}
\hfill
\subfigure[\textit{max}]{
\includegraphics[width=0.23\linewidth]{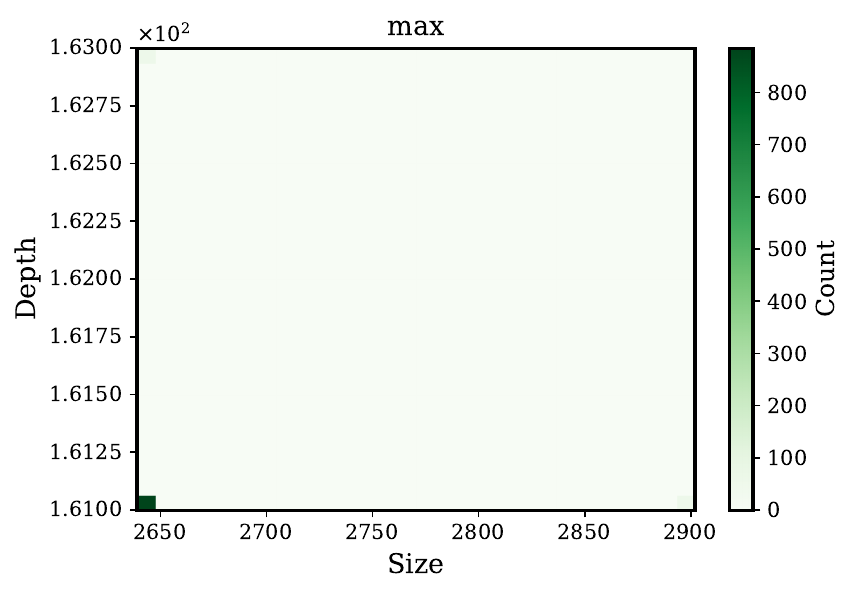}
}
\hfill
\subfigure[\textit{router}]{
\includegraphics[width=0.23\linewidth]{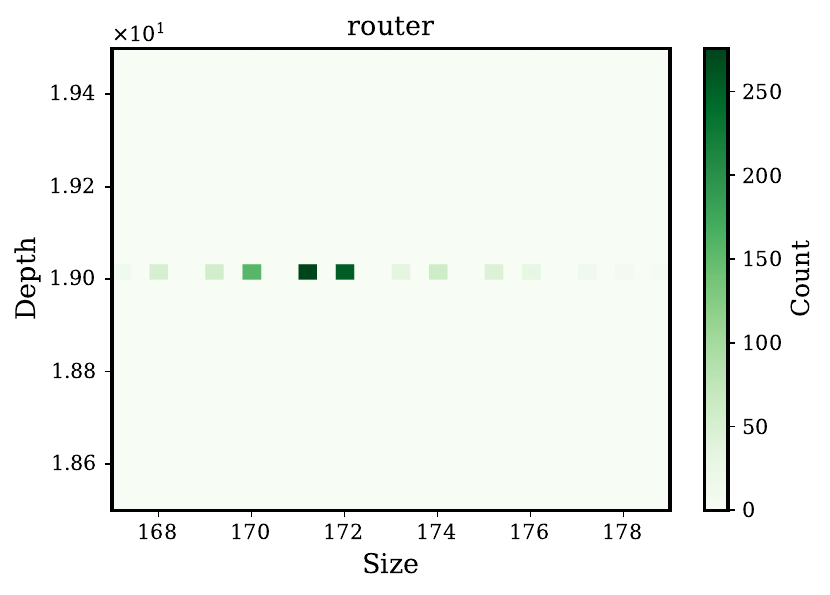}
}
\hfill
\subfigure[\textit{fpu}]{
\includegraphics[width=0.23\linewidth]{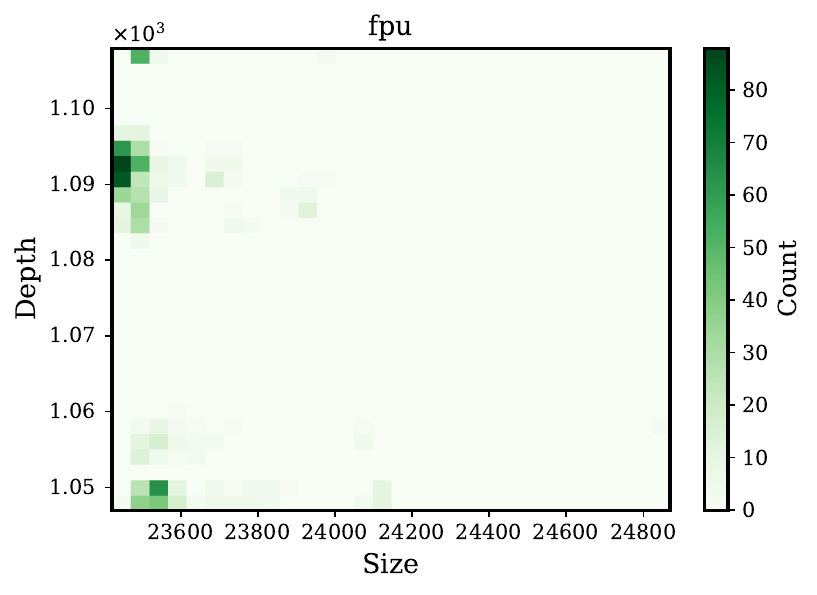}
}
\hfill
\subfigure[\textit{cavlc}]{
\includegraphics[width=0.23\linewidth]{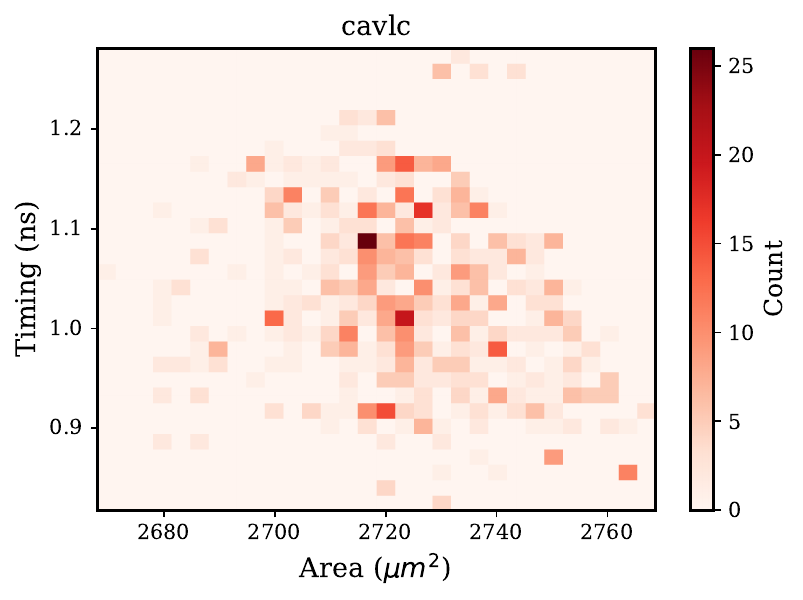}
}
\hfill
\subfigure[\textit{max}]{
\includegraphics[width=0.23\linewidth]{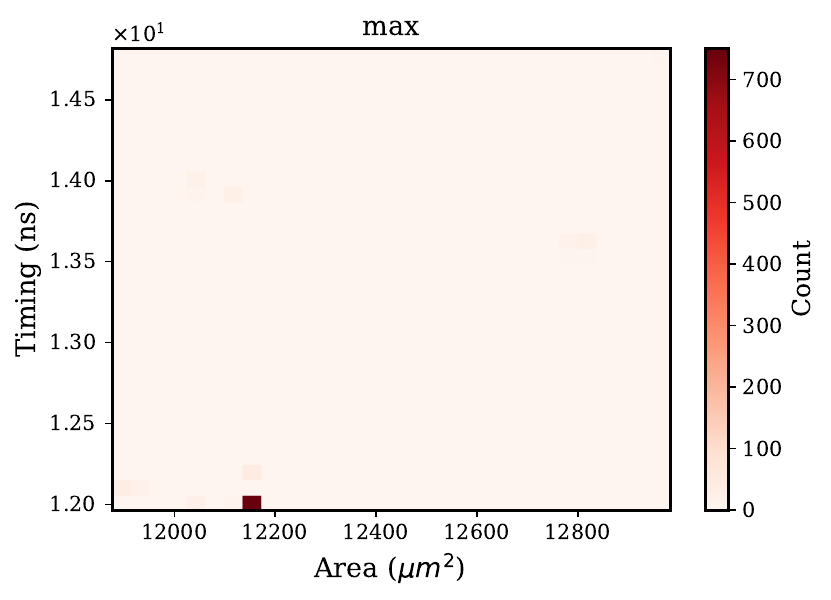}
}
\hfill
\subfigure[\textit{router}]{
\includegraphics[width=0.23\linewidth]{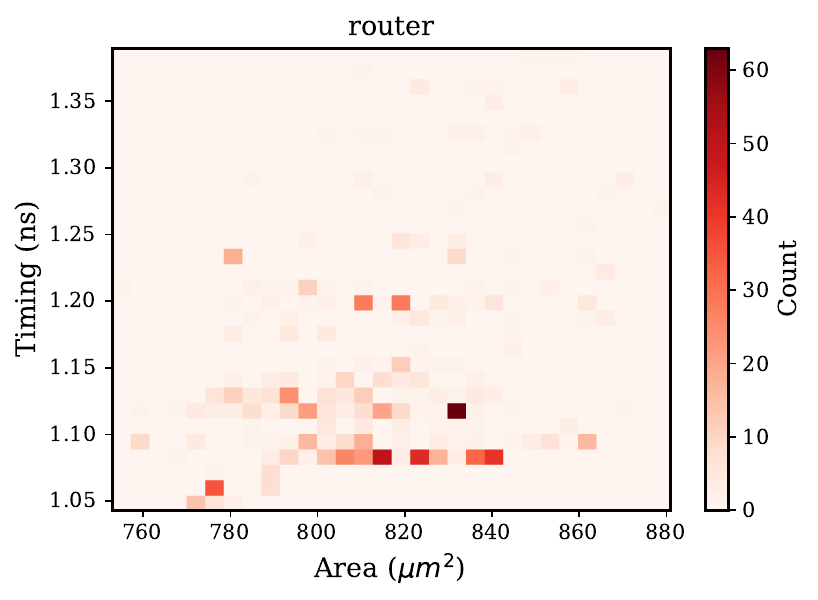}
}
\hfill
\subfigure[\textit{fpu}]{
\includegraphics[width=0.23\linewidth]{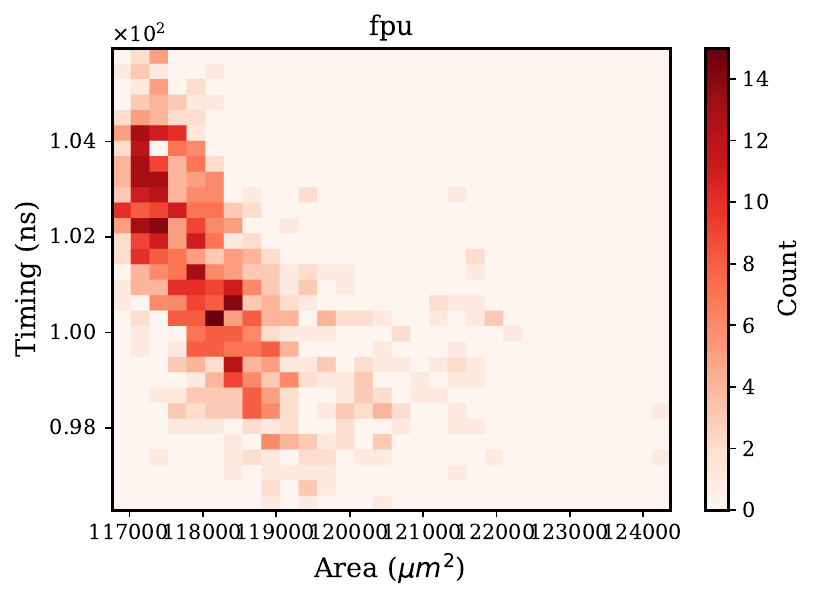}
}
\hfill
\subfigure[\textit{cavlc}]{
\includegraphics[width=0.21\linewidth]{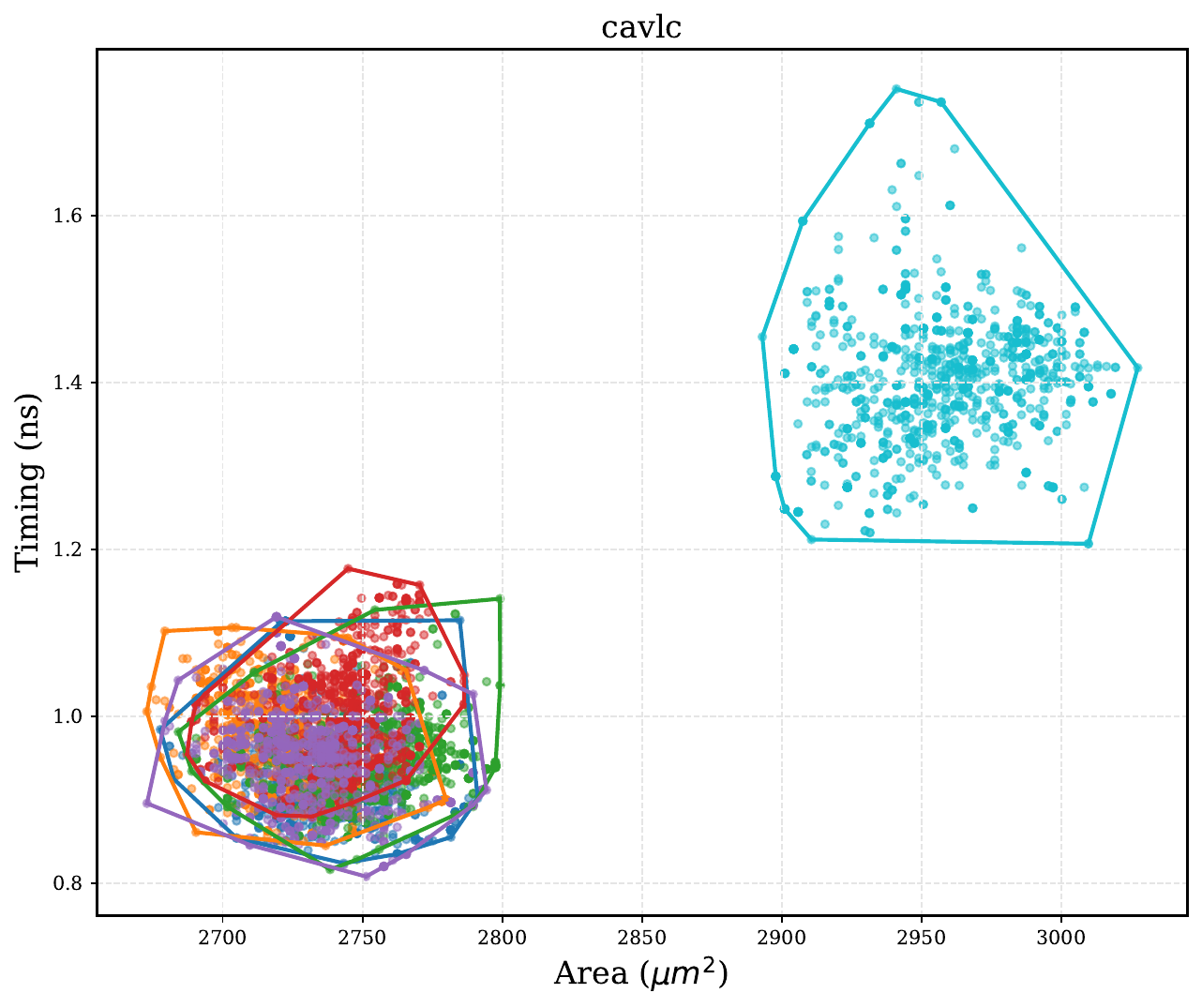}
}
\hfill
\subfigure[\textit{max}]{
\includegraphics[width=0.21\linewidth]{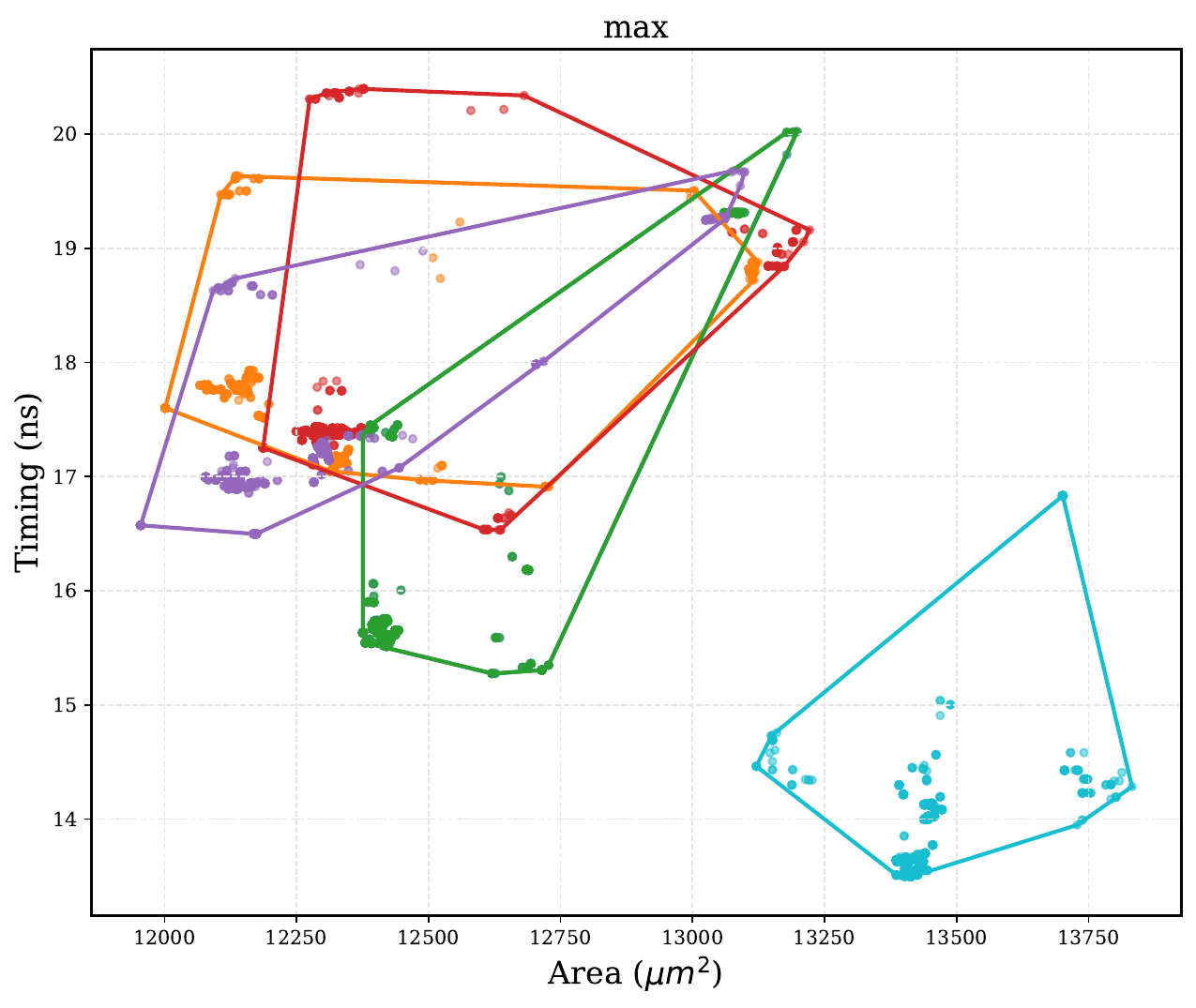}
}
\hfill
\subfigure[\textit{router}]{
\includegraphics[width=0.21\linewidth]{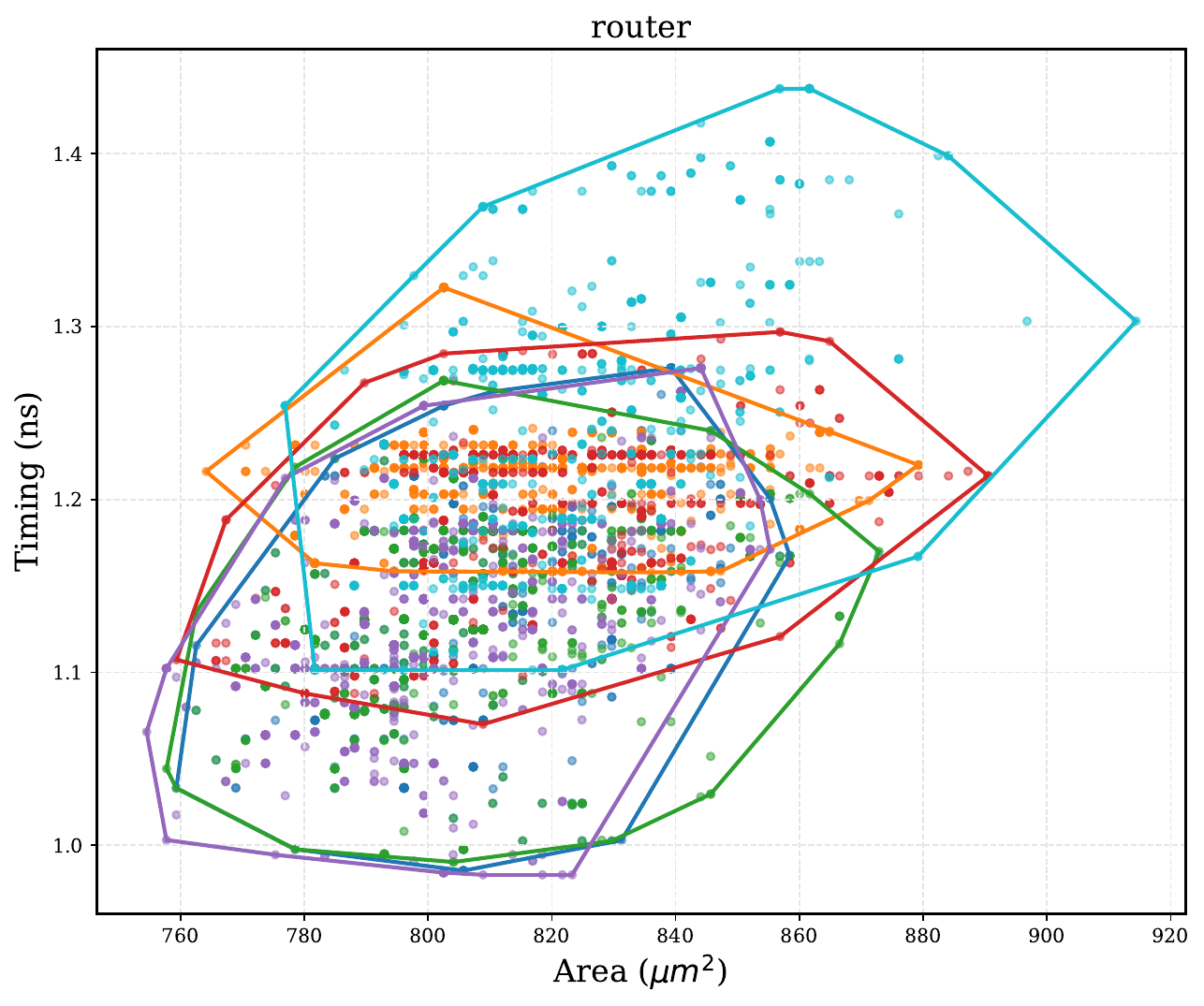}
}
\hfill
\subfigure[\textit{fpu}]{
\includegraphics[width=0.25\linewidth]{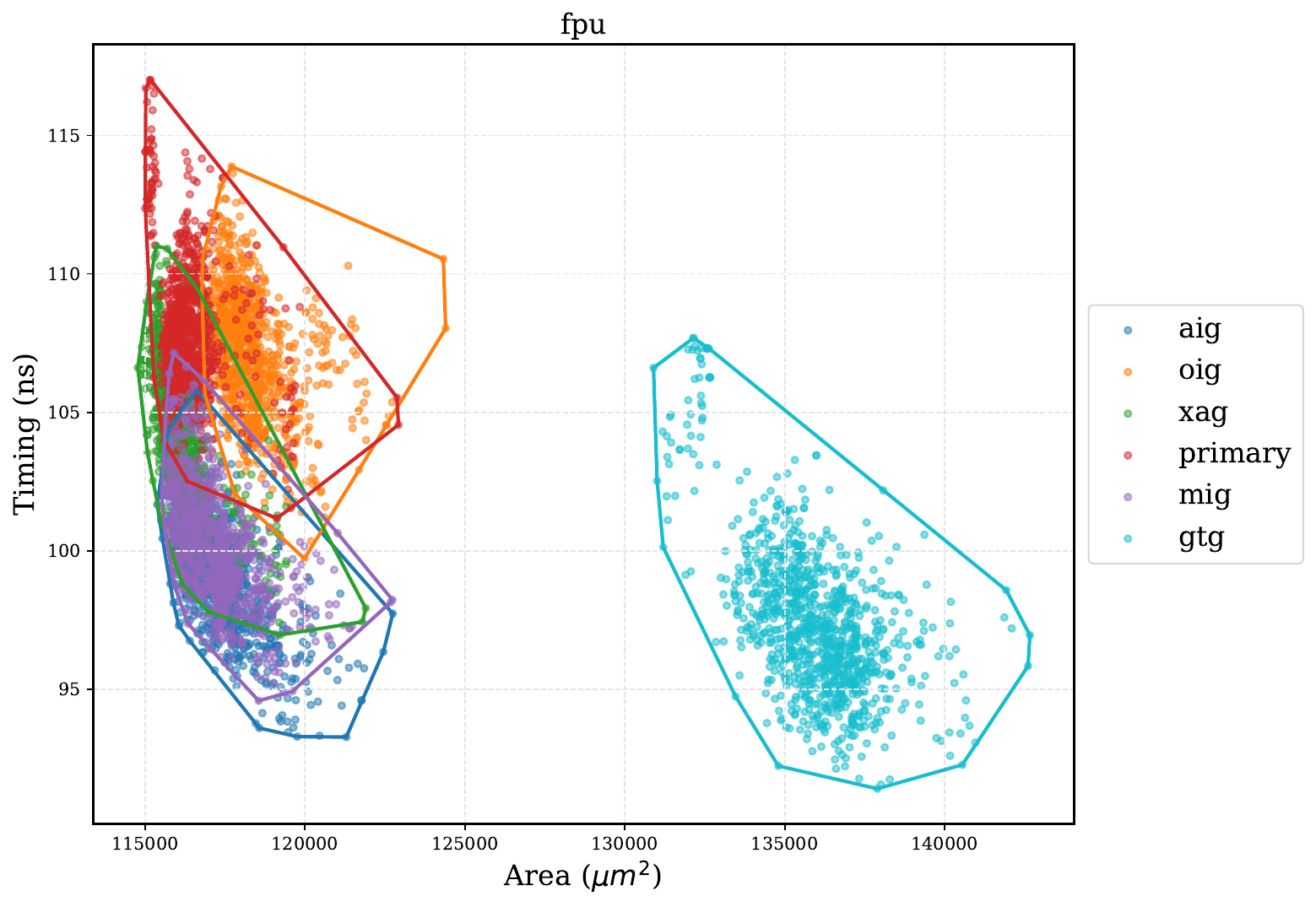}
}
\caption{The QoR distribution for source designs. Each row illustrates the QoR distribution across different designs, and each column presents the three distinct types of QoR measures for a single design. Panels (a) to (d) show the node size and graph depth distribution of the optimization AIGs for the design; (e) to (h) show the area and arrival time distribution for the corresponding ASIC netlists; and (i) to (l) show the convex hulls of the QoR distribution for different types of Boolean networks.}
\label{fig:qor_distr}
\vspace{-0.5cm}
\end{figure*}

\begin{figure*}[t]
\centering
\subfigure[recipe size=250]{
\includegraphics[width=0.23\linewidth]{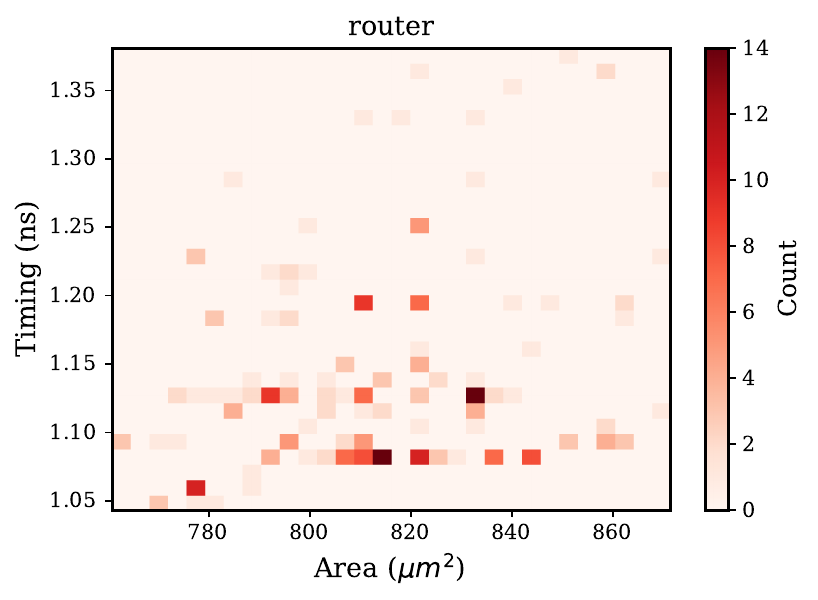}
}
\hfill
\subfigure[recipe size=500]{
\includegraphics[width=0.23\linewidth]{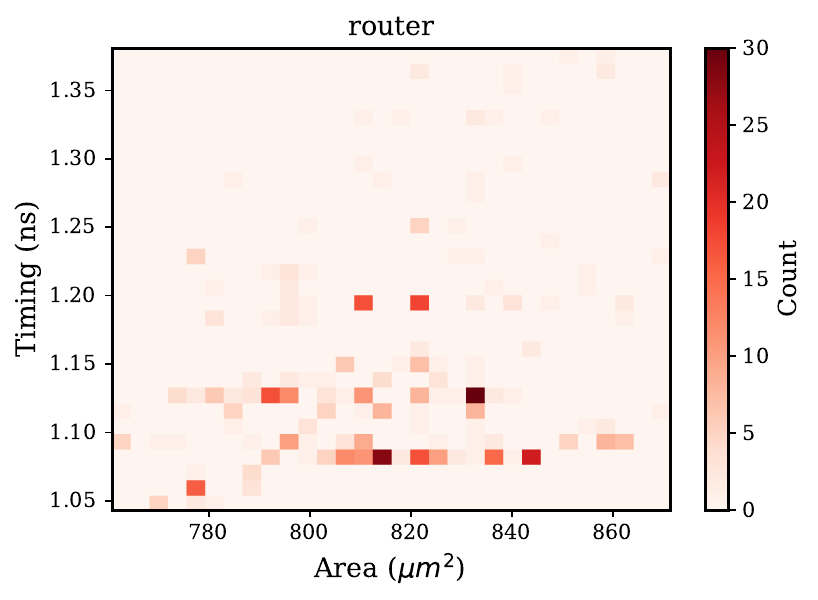}
}
\hfill
\subfigure[recipe size=750]{
\includegraphics[width=0.23\linewidth]{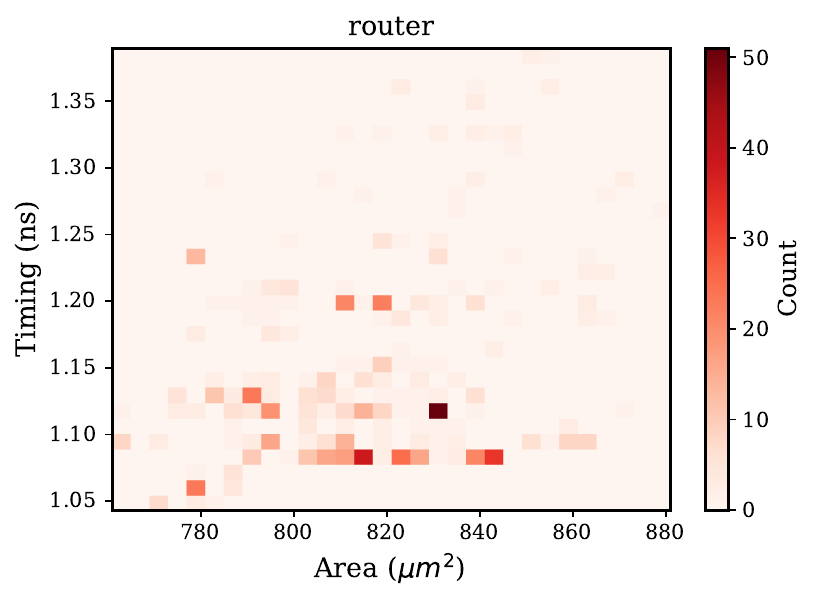}
}
\hfill
\subfigure[recipe size=1000]{
\includegraphics[width=0.23\linewidth]{figures/chara/router_recipe_1000_abc_asic.circuit.pdf}
}
\caption{The distribution of node size and graph depth for one design with incremental recipe size of AIG sets.}
\label{fig:qor_incr_logic}
\vspace{-0.2cm}
\end{figure*}

The QoR within the OpenLS-D-v1 dataset is differentiated into two main categories: technology-independent and technology-dependent metrics.
The technology-independent QoR corresponds to the Boolean circuits' structure, including the node size and maximum graph depth.
Meanwhile, technology-dependent QoR relates to the physical attributes such as the total area and arrival time characteristics.
The technology-independent QoR is applicable for types of Boolean circuits, while the technology-dependent QoR is specific to the gate-level netlists.
Since the FPGA netlist's area and timing are generally related to the size and depth of its graph structure, we just explore the ASIC-specific characteristics here.

\cref{fig:qor_distr} illustrates the QoR distributions for various design parts, highlighting three types of distributions: technology-independent QoR for Boolean networks, technology-dependent QoR for ASIC netlists, and technology-dependent QoR for different logic types of Boolean networks' corresponding ASIC netlists.
From this illustration, the following observations can be drawn:
\paragraph{Observation 1: Boolean networks with the same node size and graph depth can still have different QoR distributions of their corresponding ASIC netlist.}
Logic optimization is more concerned about the local gain that can lead to global gain, it does not necessarily affect the size and depth of the optimized Boolean networks for many optimization operators.
Although there are many similar QoR results of one AIG as shown in \cref{fig:qor_distr} (a) to (d), the area and timing distributions of their corresponding ASIC netlists are significantly dispersed.
This variance is likely due to ASIC technology mapping's sensitivity to the local structures within the AIG.
Thus, it suggests that the QoR distribution of the logic circuit is less indicative of performance compared to that of the corresponding gate-level netlist.

\paragraph{Observation 2: Different Boolean representations of one design may exhibit different behaviors.}
The cut-based technology mapping method is particularly sensitive to the exploration space of the local structures, where different local structures constructed into a circuit can lead to significantly different physical properties.
Consequently, the technology-dependent QoR of gate-level netlist varies among different logic types of Boolean networks with the same index in OpenLS-D-v1 of one design.
\cref{fig:qor_distr} shows the total area and arrival time distribution for the provided logic types of Boolean networks.
It highlights that there are non-overlapping regions between the convex hulls of distributions for different types of Boolean networks.
This indicates that the QoR distributions for different types of Boolean networks are not identical; in some cases, there is no overlap at all between the QoR of different Boolean representations.
Furthermore, the difference between the Boolean networks' QoR distributions also varies across different designs.

\cref{fig:qor_incr_logic} shows the QoR distribution under the incremental recipe size of the logic optimization of one design. 
From this, we can get:
\paragraph{Observation 3: After a certain number of optimized sequences are generated, a QoR interval typically forms, with new optimization sequences likely falling within this range.}
Since logic optimization recipes primarily target AIGs, and other Boolean networks are translated from AIGs through the logic blasting method, the focus can remain on technology-independent QoR distribution for AIGs.
Theoretically, these derived Boolean networks exhibit similar distributions under certain affine transformations.
\cref{fig:qor_incr_logic} not only showcases the node size and graph depth distribution for a selection of source designs but also highlights the incremental QoR changes across optimization sequence milestones at 250, 500, 750, and 1000 optimizations.
This visualization underscores the initial observation that after a certain point, further optimizations tend to fall within a predictable range of QoR.






\begin{figure}[h]
    \centering
    \includegraphics[width=1\linewidth]{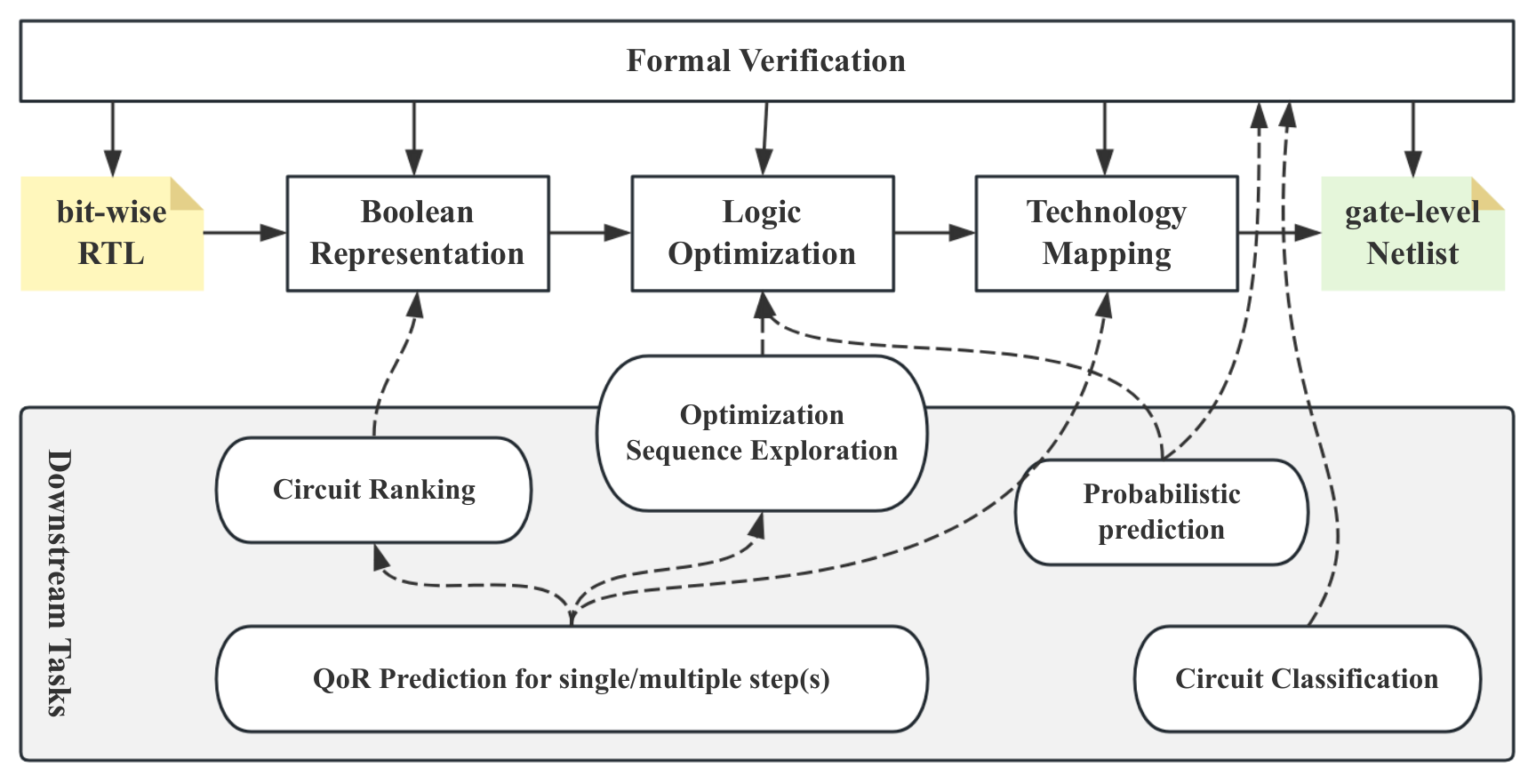}
\vspace{-0.5cm}
    \caption{Illustration of the selected downstream tasks and their potential applications within logic synthesis flow.}
    \label{fig:tasks}
\vspace{-0.5cm}
\end{figure}

\section{Tasks Formulation and Experimental Results}
\label{sec:tasks}

\begin{figure}[t]
    \centering
    \includegraphics[width=1\linewidth]{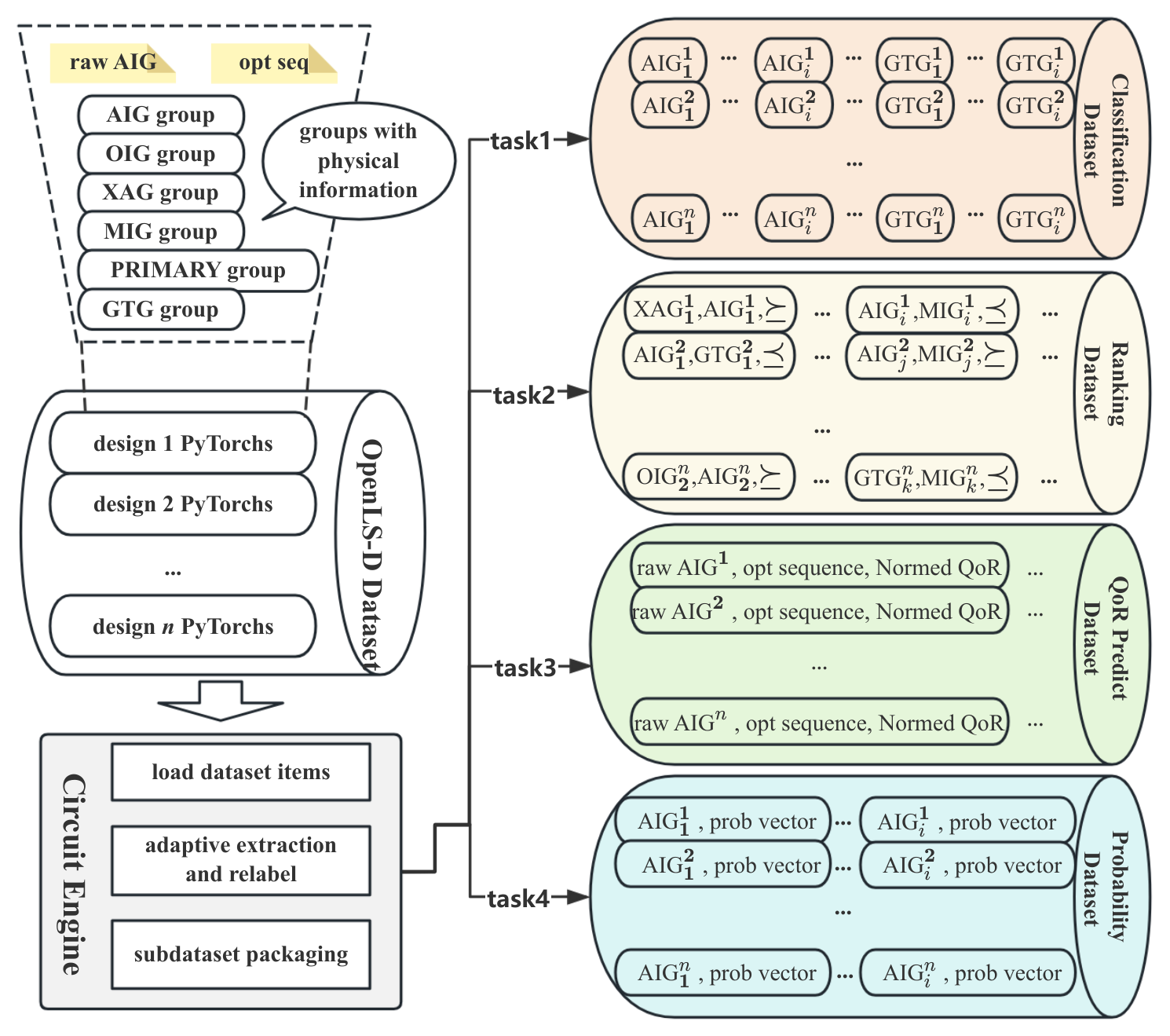}
\vspace{-0.5cm}
    \caption{Adaptive sub-dataset extraction framework of OpenLS-D-v1.}
    \label{fig:dataset}
\vspace{-0.5cm}
\end{figure}

In this section, we outline the formulation of the downstream tasks and present the experimental results for each.
\cref{fig:tasks} illustrates the context of the four selected downstream tasks—circuit classification, circuit ranking, QoR prediction, and probability prediction—and their potential applications within the logic synthesis flow.
Each of these tasks utilizes a unified adaptive sub-dataset extraction framework derived from OpenLS-D-v1.

\subsection{Adaptive Dataset Extraction Framework}

\cref{fig:dataset} illustrates the framework for adaptive dataset extraction from OpenLS-D-v1, tailored for various downstream tasks.
Leveraging the capabilities of the previously mentioned circuit engine, we can simulate the behavior of the original Boolean circuits using the specially defined ``Circuit'' class.
For a specific task within logic synthesis, the circuit engine initially loads the relevant portions of the dataset as input data for further processing.
Subsequently, an adaptive function, ``\textit{load\_adaptive\_subdataset(db:OpenLS-D-v1)}'', systematically traverses each recipe of the Boolean circuits within each item to calculate and label the target items required for the task.
These labeled items are then packaged into a sub-dataset dedicated to that specific task.

For example,  if users wish to access only the ``ABC-AIG'' related data specifically, then merely the ``\textit{raw.pt}'' and ``\textit{abc.aig.pt}'' files are required for the process.
Furthermore, \cref{fig:dataset} illustrates the items of the four sub-datasets tailored to the selected tasks.
It is crucial to implement necessary operations within the ``Circuit'' class to cater to specific task requirements.
The extraction of each sub-dataset depicted in \cref{fig:dataset} will be discussed in detail in the following subsections.

\subsection{Environment Setup}
The experimental environment for the following tasks is as follows:
The hardware configuration: CPU~(Intel Xeon Platinum 8380 CPU with 160 cores), Memory~(512 GB RAM), GPU~(NVIDIA A100 with 40 GB VRAM), while the software configuration: Operation System~(Ubuntu 20.04.6), PyTorch~(2.0.1), CUDA~(12.0), torch\_geometry~(2.3.1), scikit-learn~(1.2.2), pandas~(1.5.3), and matplotlib~(3.7.1).
This high-performance setup provides a robust platform for conducting and evaluating experiments efficiently, ensuring the smooth handling of large datasets and complex computations. However, it is worth noting that the downstream tasks in this study are not resource-intensive and utilize only a fraction of the system’s capacity for training.

\subsection{Task1: Circuit Classification}
\label{sec:tasks:task1}

\subsubsection{Problem Formulation}
The circuit classification task represents the basic attribute of the circuit analysis tasks, and it can be formulated by the following:
\begin{italics}
Given a set of Boolean circuits $\{\mathcal{C}_i\}_{i=1}^n$, classify these circuits into $k~(k\le n)$ classes, and each class follows the property of the defined Boolean equivalence, thus, $\{\mathcal{C}_i\} \equiv \{\mathcal{C}_j\}$ in the same class, otherwise, $\{\mathcal{C}_i\} \neq \{\mathcal{C}_j\}$.
\end{italics}


\subsubsection{Dataset Adaptation}
Task 1 part of \cref{fig:dataset} shows the components of the circuit classification dataset.
The different boolean representations of one design perform the same Boolean function, as shown in \cref{fig:component}, thus, all these variations need to be in the same classes with the same label.
In this task, the selected designs are labeled with continuous natural numbers from 0 to $n$, where $n$ is the number of classes.

\subsubsection{Solution: Circuit Graph Embedding Learning and Classification}
\begin{figure}[t]
    \centering
    \includegraphics[width=1\linewidth]{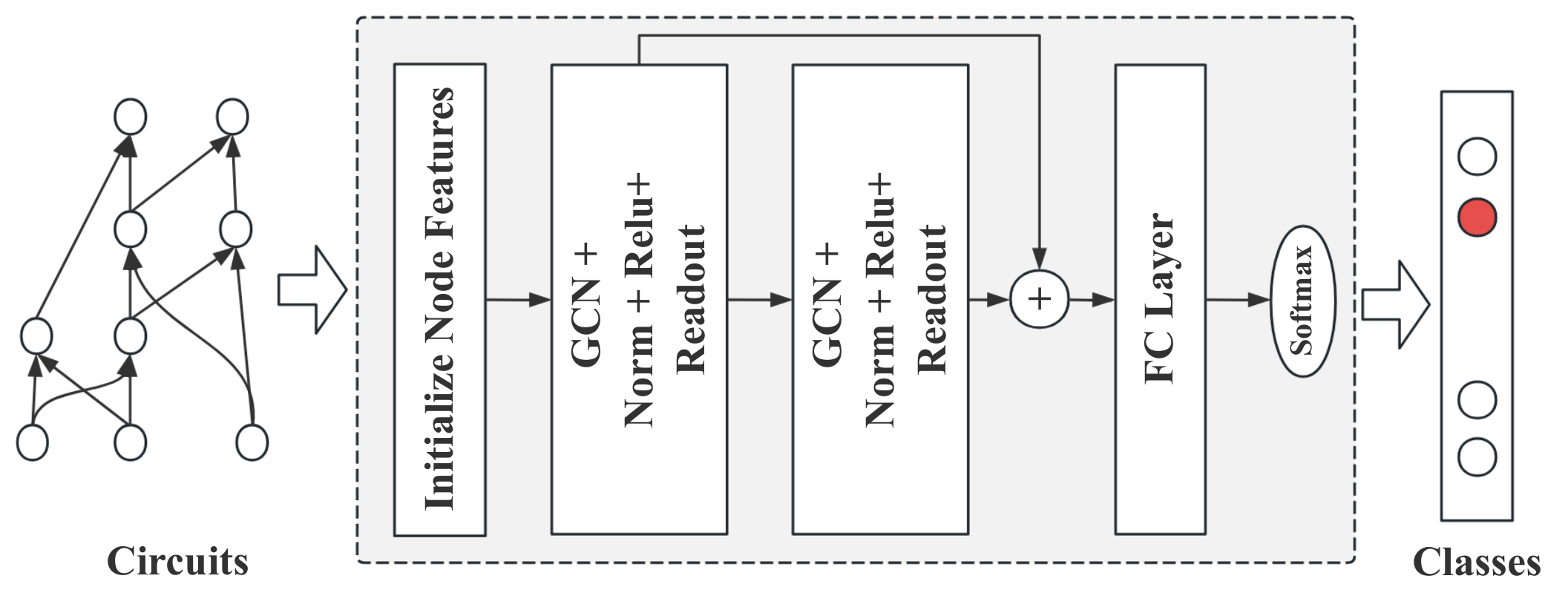}
\vspace{-0.5cm}
    \caption{The GNN-based Model for the circuit classification task.}
    \label{fig:task:solution:class}
\vspace{-0.5cm}
\end{figure}
As shown in \cref{fig:task:solution:class}, a typical two-layer GCN-based graph embedding solution is given for this task.
It comprises two steps: the preprocessing step, and the learning step.
The first preprocessing initializes the node embedding by the node type and its truth table to embed sufficient circuit information.
The following learning step aggregates the node's embedding to the graph embedding, then an MLP layer is used to predict the class number of current graph embedding.
The cross-entropy function is used as the loss function here.

\subsubsection{Experimental Results}

\begin{figure}[t]
\vspace{-0.5cm}
\centering
\subfigure[Training circuits (800/class)]{
\includegraphics[width=0.45\linewidth]{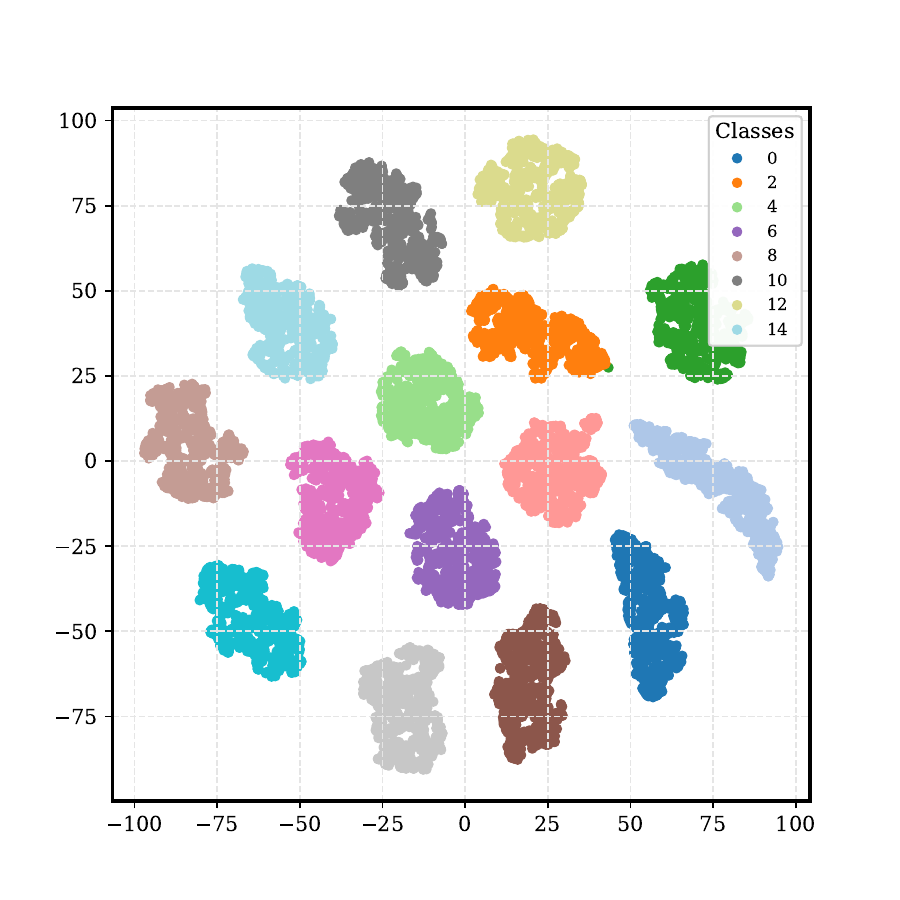}
}
\hfill
\subfigure[Test circuits (200/class)]{
\includegraphics[width=0.45\linewidth]{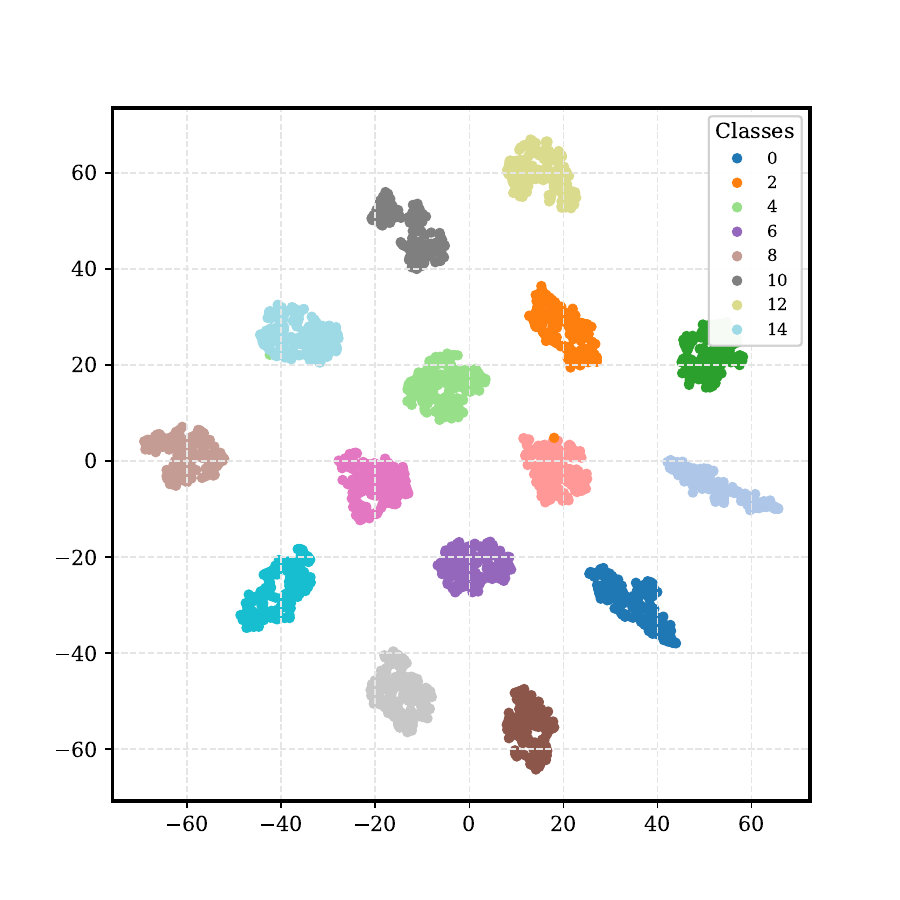}
}
\caption{The t-SNE visualization for the circuit classification task with 15 designs. Labels to designs: 0~(\textit{router}), 1~(\textit{usb\_phy}), 2~(\textit{cavlc}), 3~(\textit{adder}), 4~(\textit{systemcdes}), 5~(\textit{max}), 6~(\textit{spi}), 7~(\textit{wb\_dma}), 8~(\textit{des3\_area}), 9~(\textit{tv80}), 10~(\textit{arbiter}), 11~(\textit{mem\_ctrl}), 12~(\textit{square}), 13~(\textit{aes}), 14~(\textit{fpu}).}
\label{fig:task:exper:class}
\vspace{-0.5cm}
\end{figure}

The experiment presented here is based on a selection of 15 designs and the reassigned labels from the OpenLS-D-v1 dataset as shown in the caption of \cref{fig:task:exper:class}.
The 80\% of each design's selected Boolean circuits are used for training, and the remaining are used for validation.

The hyperparameters used in this experiment are as follows: input feature size of 64, hidden feature size of 128, learning rate of 0.0001, learning decay rate of 1e-5, and batch size of 16.
By the 10th epoch, the test accuracy reaches approximately 99.8\%.
\cref{fig:task:exper:class} presents the t-SNE visualization of the circuit classification results, clearly demonstrating distinct and independent distributions for each class.
This task shows strong potential for analyzing circuit characteristics effectively.

\subsection{Task2: Circuit Ranking}
\label{sec:tasks:task2}

\subsubsection{Problem Formulation}
Different Boolean representations of one certain design can lead to different QoRs for their corresponding gate-level netlist.
We can formulate this ranking problem by the following:
\begin{italics}
We can say that the Boolean circuits $\mathcal{C}_{0} \preceq \mathcal{C}_{1}$ only if the OoR of $\mathcal{C}_{0}$ is better than $\mathcal{C}_{1}$, otherwise, $\mathcal{C}_{0} \succeq \mathcal{C}_{1}$.
The QoR can be defined by the timing, area, power, or other criteria of the following EDA steps. 
\end{italics}

However, the technology mapping, timing analysis, and other physical design steps are time-consuming.
If we can find the best presentation of the current design, it can improve the efficiency of EDA.
In this task, we only focus on the circuit ranking problem before the technology mapping step.

\subsubsection{Dataset Adaptation}
Task 2 part of \cref{fig:dataset} shows the components of the circuit ranking dataset.
The partial order is defined as the following:
\begin{equation}
\vspace{-0.2cm}
\scriptsize
\nonumber
\begin{aligned}
Timing^{\mathcal{C}_0} < Timing^{\mathcal{C}_1}& \Rightarrow \mathcal{C}_{0} \preceq \mathcal{C}_{1}; \\
Timing^{\mathcal{C}_0} > Timing^{\mathcal{C}_1}& \Rightarrow \mathcal{C}_{0} \succeq \mathcal{C}_{1}; \\
Timing^{\mathcal{C}_0} = Timing^{\mathcal{C}_1}&, then: \\
Area^{\mathcal{C}_0} &< Area^{\mathcal{C}_1} \Rightarrow \mathcal{C}_{0} \preceq \mathcal{C}_{1}; \\
Area^{\mathcal{C}_0} &> Area^{\mathcal{C}_1} \Rightarrow \mathcal{C}_{0} \succeq \mathcal{C}_{1}; \\
Area^{\mathcal{C}_0} &= Area^{\mathcal{C}_1}, pass \\
\end{aligned}
\end{equation}
Then, we construct the graph pairs and their partial order by the combination of the logic types as listed at \cref{tab:fcs}, and we only consider the $\mathcal{C}_{0} \preceq \mathcal{C}_{1}$ status as the $\mathcal{C}_{0} \succeq \mathcal{C}_{1}$ can be converted to $\mathcal{C}_{1} \preceq \mathcal{C}_{0}$.
The tie condition of timing and area is also not considered.

\subsubsection{Solution: Pair-wise Graph Ranking}

\begin{figure}[t]
    \centering
    \includegraphics[width=1\linewidth]{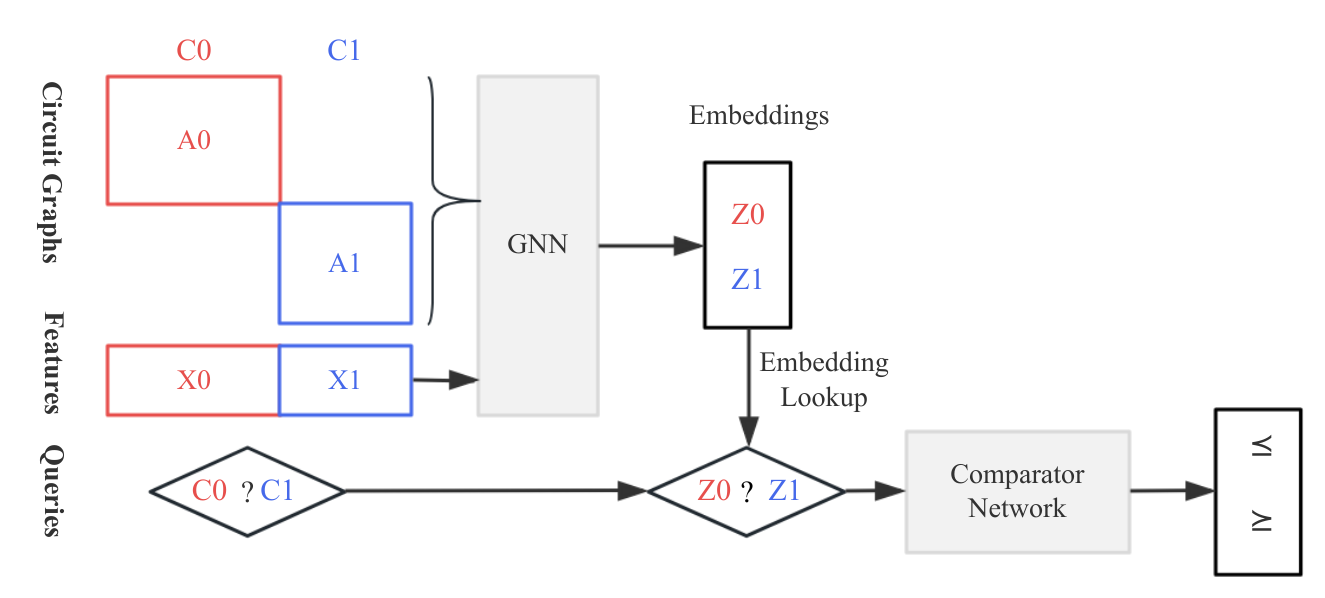}
\vspace{-0.5cm}
    \caption{The pair-wise graph ranking model.}
    \label{fig:task:solution:ranking}
\end{figure}

\cref{fig:task:solution:ranking} illustrates the architecture of the pair-wise graph ranking model.
We first combine these two Boolean circuits $\mathcal{C}_{0}$ and $\mathcal{C}_{1}$ into a block matrix.
Then a GNN-based graph embedding will learn the combined embedding of these two circuits.
Finally, the MLP-based compassion network will tell that $\mathcal{C}_{0} \preceq \mathcal{C}_{1}$ or $\mathcal{C}_{0} \succeq \mathcal{C}_{1}$;
The binary cross entropy function is used as the loss function here.

\subsubsection{Experimental Results}

\begin{figure}[h]
\vspace{-0.5cm}
\centering
\subfigure[Test, Epoch=1, Acc=92.51\%]{
\includegraphics[width=0.45\linewidth]{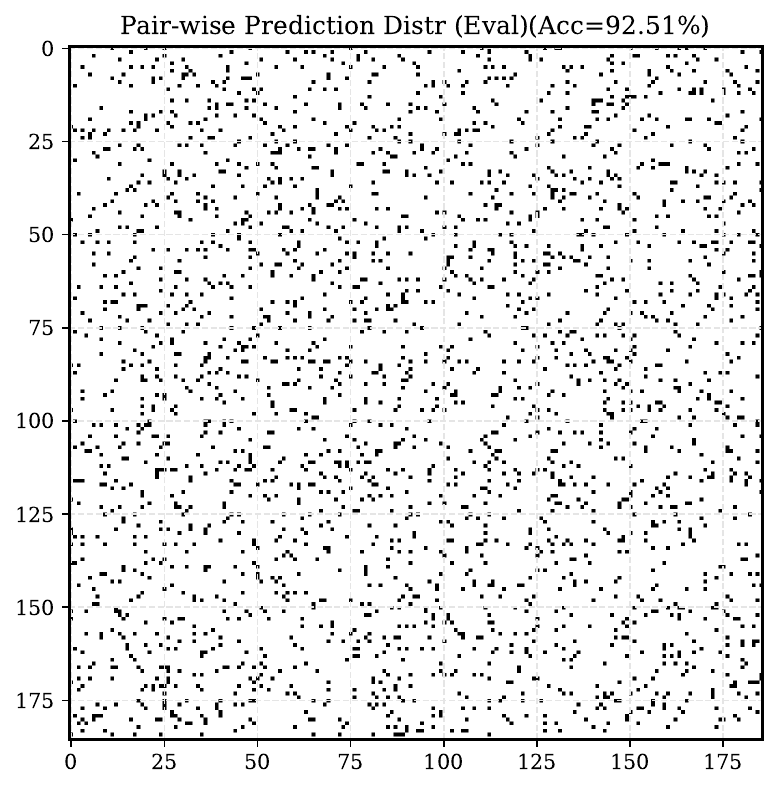}
}
\hfill
\subfigure[Test, Epoch=20, Acc=99.49\%]{
\includegraphics[width=0.45\linewidth]{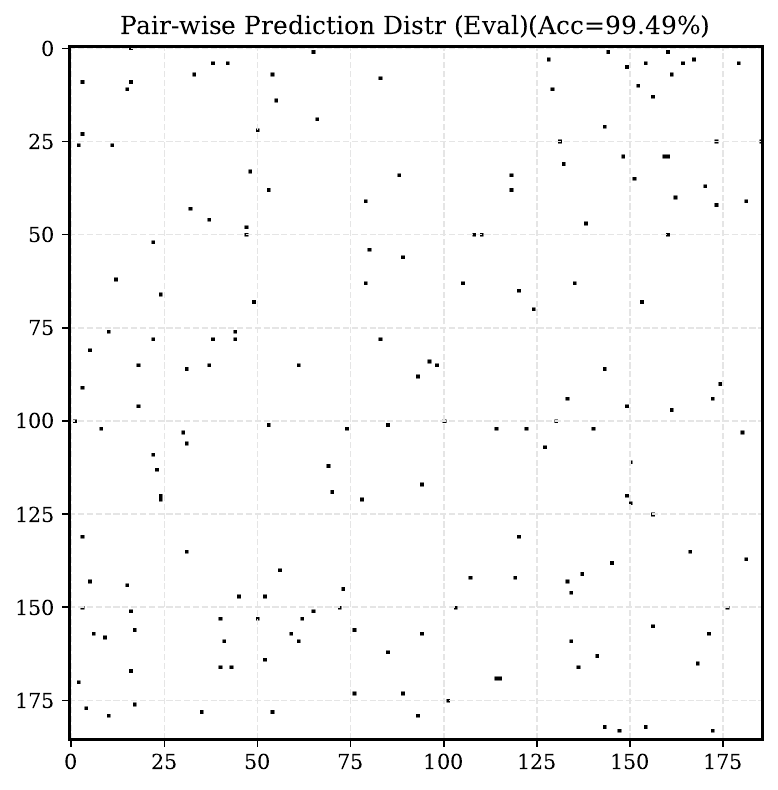}
}
\caption{Pair-wise prediction distribution of the evaluation.}
\label{fig:task:exper:ranking}
\vspace{-0.2cm}
\end{figure}

\begin{table}[h]
\centering
\caption{Performance comparison with different Graph Embedding Model~(epoch = 20).}
\begin{tabular}{c|c|c|c}
\toprule
\diagbox{Metric}{Model} &   GCNConv   &  GraphSAGE  &  GINConv \\
\midrule
\textit{BCE\_loss} &  5.09    &  \textbf{4.31}    &  4.43    \\
\textit{Accuracy}  &  99.43\% &  \textbf{99.49\%} &  99.47\% \\
\textit{Precision} &  0.9939  &  \textbf{0.9949}  &  0.9944  \\
\textit{Recall}    &  0.4993  &  \textbf{0.4995}  &  0.4969  \\
\textit{F1-score}  &  0.6647  &  \textbf{0.6651}  &  0.6627  \\
\bottomrule
\end{tabular}
\label{tab:task:exper:ranking}
\end{table}

The experiment presented here is based on a selection of 10 designs from the OpenLS-D-v1 dataset: \textit{ctrl}, \textit{router}, \textit{int2float}, \textit{ss\_pcm}, \textit{usb\_phy}, \textit{sasc}, \textit{cavlc}, \textit{simple\_spi}, \textit{priority}, and \textit{i2c}. 
Using the previously defined partial order, approximately 120,000 pairs were extracted from OpenLS-D-v1 for analysis. Of each design’s selected Boolean circuits, 70\% were used for training, and the remaining for validation.

The hyperparameters used in this experiment are as follows: input feature size of 64, hidden feature size of 128, learning rate of 0.0001, learning decay rate of 1e-5, and batch size of 32.
\cref{fig:task:exper:ranking} illustrates the distribution of pair-wise prediction outcomes on the test dataset, 
with black nodes indicating incorrect predictions and white nodes indicating correct ones. 
The results indicate that all three models achieve high accuracy and effective pair-wise ranking predictions, demonstrating their validity for this application.

\subsection{Task3: QoR Prediction}
\label{sec:tasks:task3}

\subsubsection{Problem Formulation}

\cref{fig:qor_incr_logic} and its corresponding observation 3 illustrate the motivation behind the QoR prediction task: once an adequate QoR distribution for a circuit is obtained, it is possible to make predictions about the inputs.
\begin{italics}
Given a known QoR distribution $\mathcal{D}$, a Boolean circuit $\mathcal{C}$, and an optimization sequence $S$, the objective is to predict the QoR of $\mathcal{C}$ with $S$.
\end{italics}

\subsubsection{Dataset Adaptation}
Task 3 part of \cref{fig:dataset} presents the profile of the sub-dataset used for QoR prediction.
In the proposed framework, an ASIC gate-level netlist is generated for each optimized Boolean network.
The Area and Timing are used as the QoR for the unoptimized Boolean networks and their respective optimization sequences.
Consequently, each data item is organized as \{unoptimized Boolean network, optimization sequence, Area, Timing\}.
Notably, due to the specific requirements of this task, different designs share the same optimization sequence within the same recipe index.

\subsubsection{Solution: QoR Distribution Convergence Learning}

\begin{figure}[t]
    \centering
    \includegraphics[width=1\linewidth]{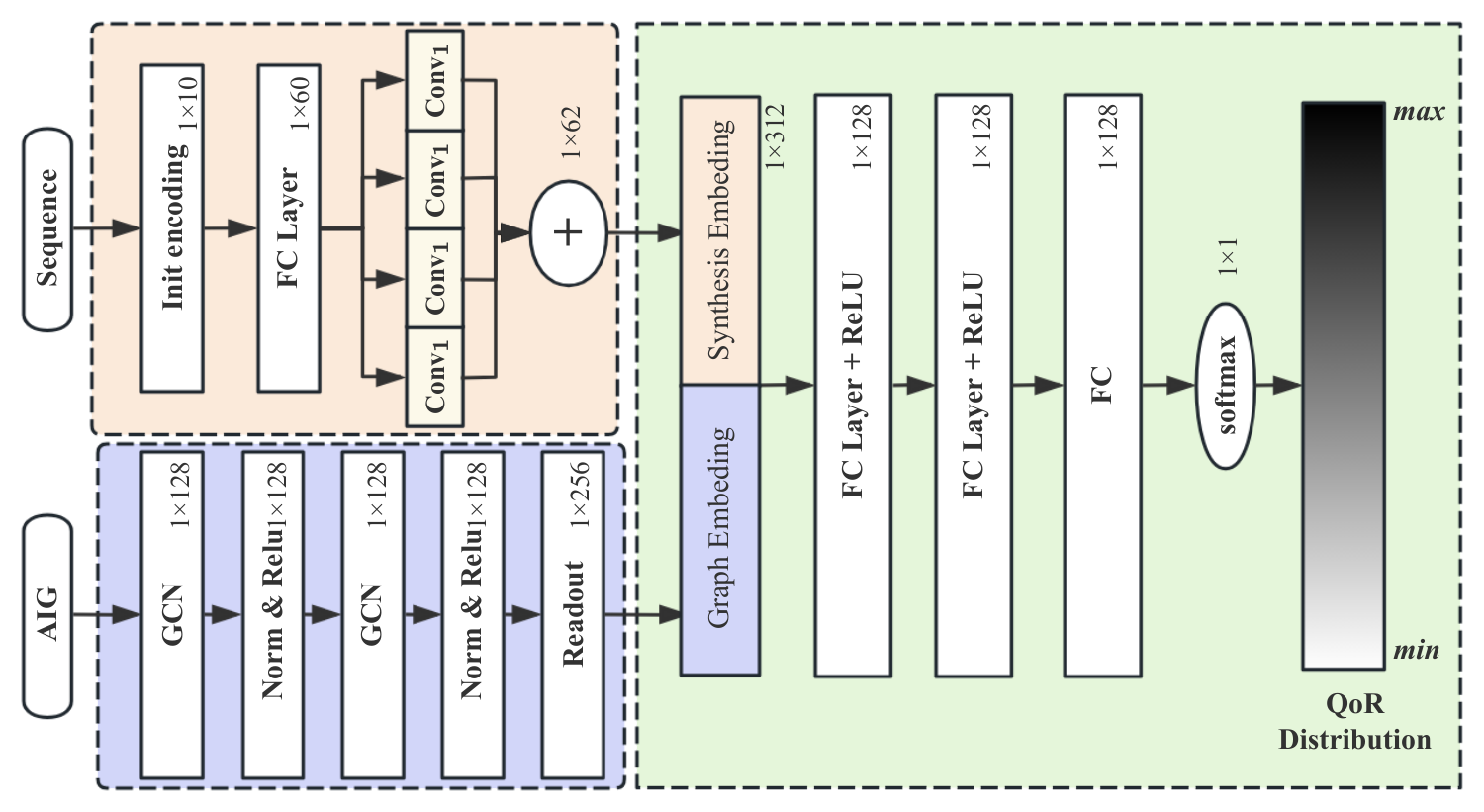}
\vspace{-0.5cm}
    \caption{Convolutional networks for QoR synthesis formulation prediction.}
    \label{fig:task:solution:qor}
\vspace{-0.5cm}
\end{figure}


\cref{fig:task:solution:qor} illustrates the proposed QoR prediction architecture.
Each feature dimension is labeled in each layer.
First, the input AIG and the optimization sequence are each embedded separately.
The AIG is embedded using a standard GNN-based graph embedding approach, incorporating global mean and sum pooling as the readout layer.
The optimization sequence, represented by numerically encoded synthesis recipes, is processed through a linear layer followed by four convolutional filters with dimensions \{1$\times$14, 1$\times$15, 1$\times$16, 1$\times$17\}, designed to extract relevant features.
These two embeddings are then concatenated and fed into an MLP-based distribution learning module.
Finally, a softmax activation function in the output layer predicts the position within the overall distribution, providing the QoR prediction.

\subsubsection{Experimental Results}

\begin{table}[t]
\centering
\scriptsize
\setlength{\tabcolsep}{1.2pt}
\caption{MAPE Results for QoR Prediction (\%)}
\begin{tabular}{cc|cc|cc}
\toprule
\multicolumn{2}{c|}{\makecell{Variant1 \\Seen Design, Unseen Recipe}} & \multicolumn{2}{c|}{\makecell{Variant2 \\Unseen Design, Seen Recipes}} &  \multicolumn{2}{|c}{\makecell{Variant3 \\Unseen IC and Recipes}}\\
\midrule
Area & Timing & Area & Timing & Area & Timing \\
\midrule
0.69 & 7.87 & 1.06 & 6.50 & 1.17 & 6.49 \\
\bottomrule
\end{tabular}
\label{tab:task:exper:qor}
\vspace{-0.2cm}
\end{table}

\begin{figure}[t]
\vspace{-0.5cm}
\centering
\subfigure[\textit{cavlc(\textbf{area})}]{
\includegraphics[width=0.21\linewidth]
{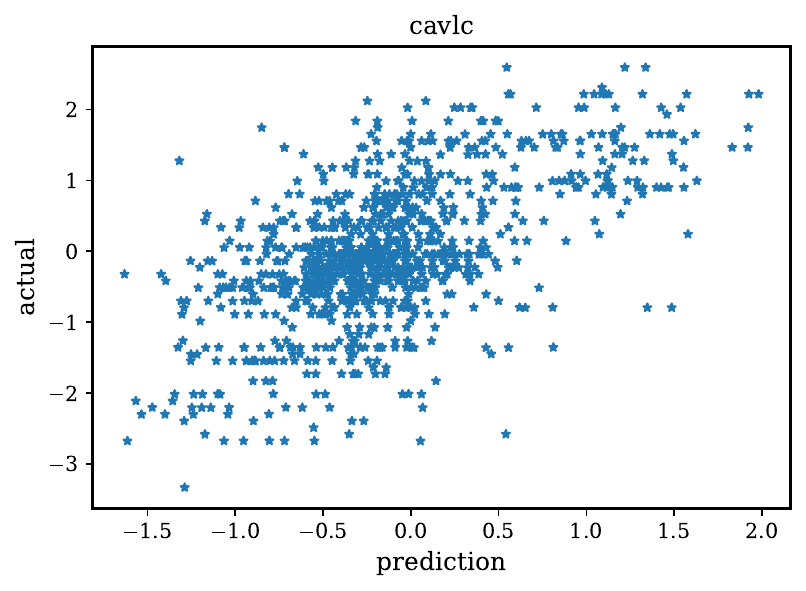}
}
\hfill
\subfigure[\textit{max(**)}]{
\includegraphics[width=0.21\linewidth]
{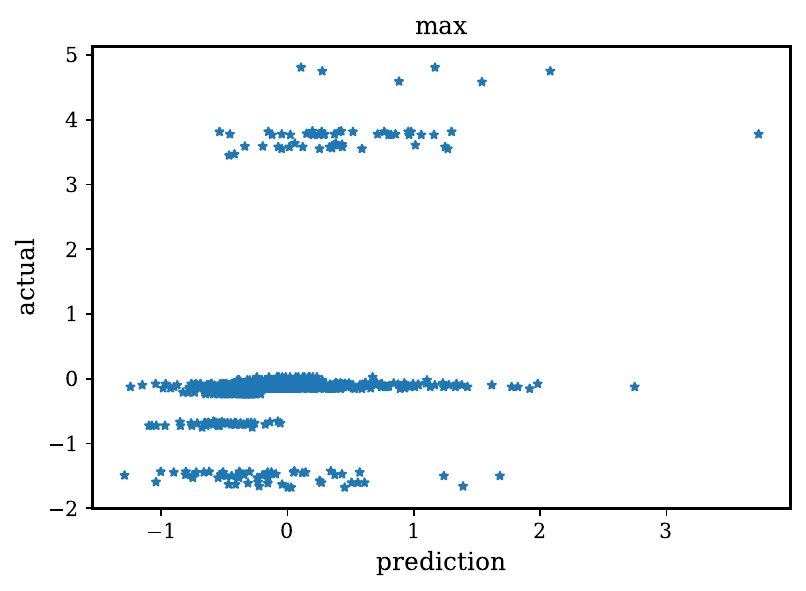}
}
\hfill
\subfigure[\textit{router(**)}]{
\includegraphics[width=0.21\linewidth]
{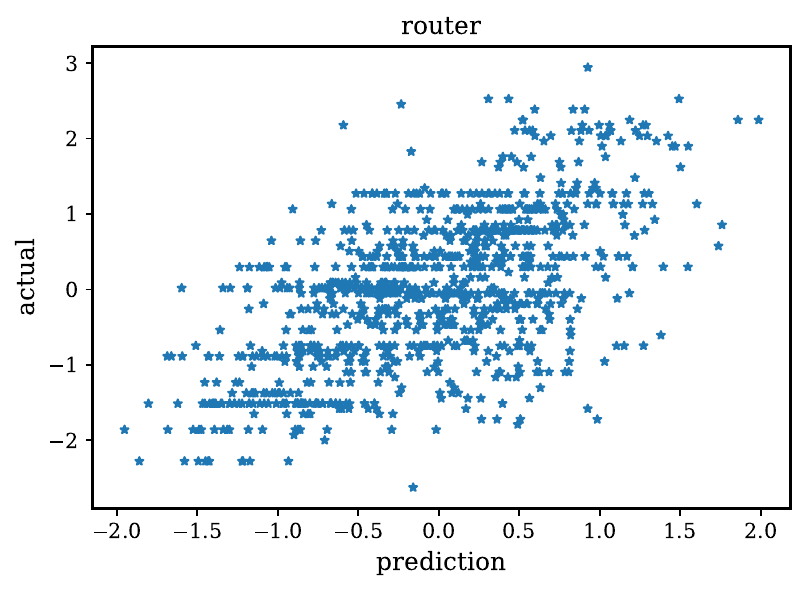}
}
\hfill
\subfigure[\textit{fpu(**)}]{
\includegraphics[width=0.21\linewidth]
{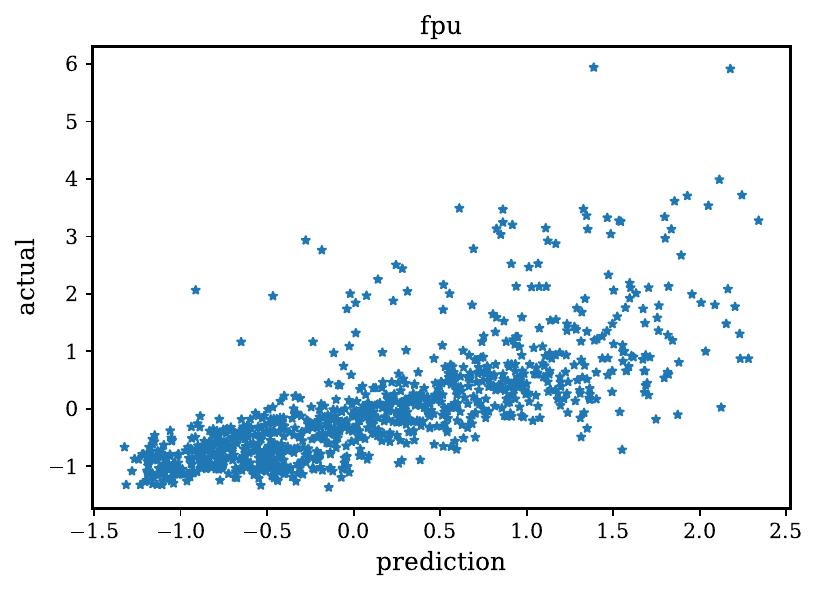}
}
\\
\subfigure[\textit{cavlc(\textbf{timing})}]{
\includegraphics[width=0.21\linewidth]
{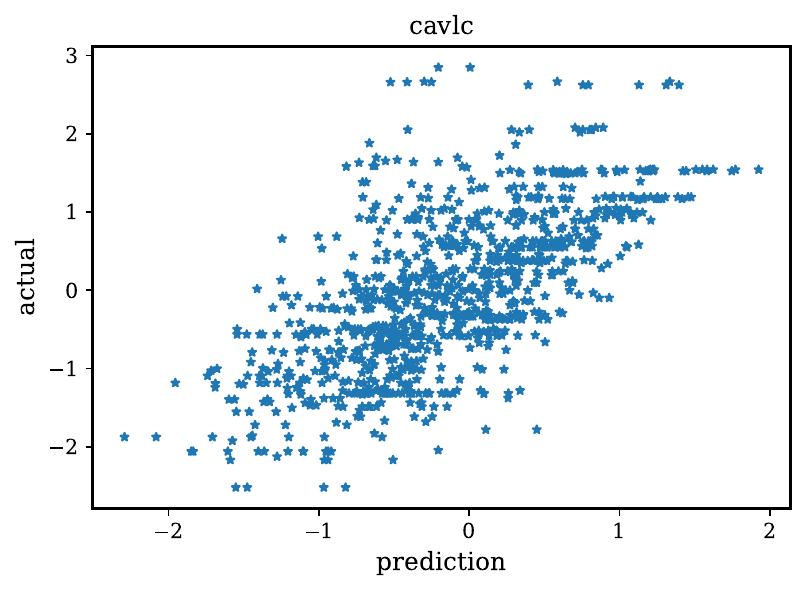}
}
\hfill
\subfigure[\textit{max(**)}]{
\includegraphics[width=0.21\linewidth]
{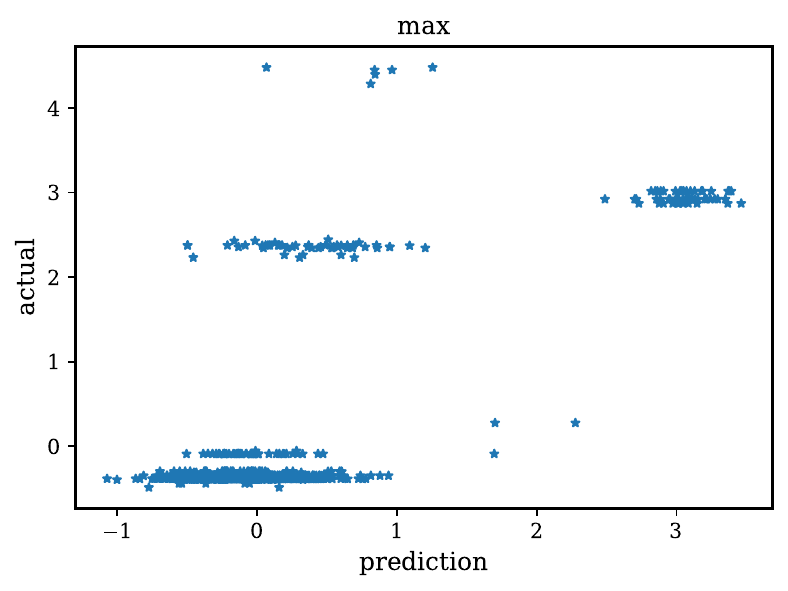}
}
\hfill
\subfigure[\textit{router(**)}]{
\includegraphics[width=0.21\linewidth]
{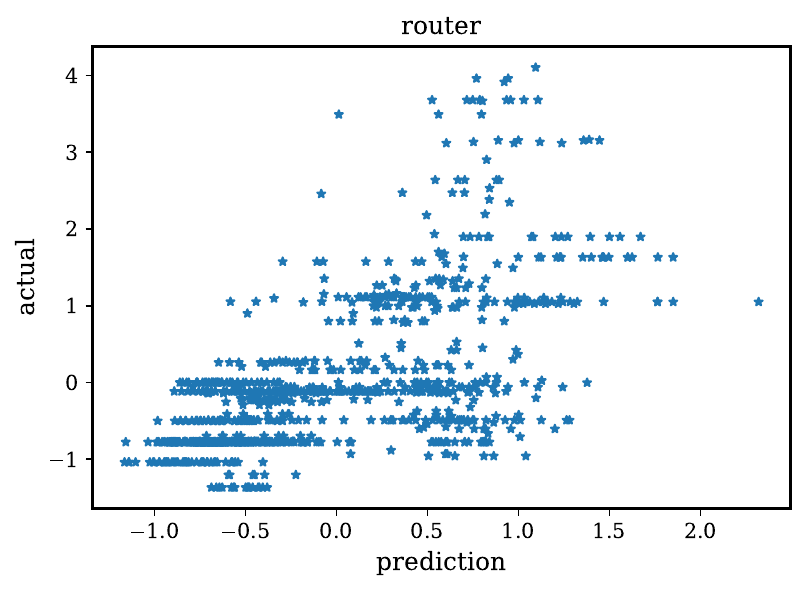}
}
\hfill
\subfigure[\textit{fpu(**)}]{
\includegraphics[width=0.21\linewidth]
{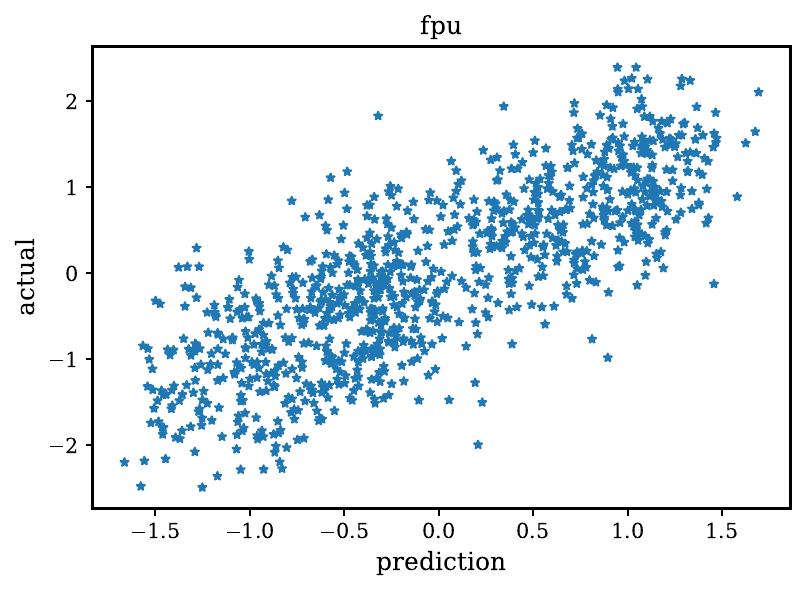}
}
\caption{QoR predictions vs. ground truth of variant 1. (a)-(d) focus on the prediction of area, while (e)-(h) focus on timing.}
\label{fig:task:exper:qor}
\vspace{-0.5cm}
\end{figure}

There are three variants of the QoR prediction tasks, and the detailed configuration of each variant is outlined as follows:
\begin{itemize}
    \item \textit{\textbf{Variant1}}: Predicting QoR of synthesizing unseen Recipes. All designs and the AIG outputs of 700 synthesis recipes are used as the training set, and the remaining 300 recipes of each design are used as the test set;
    \item \textit{\textbf{Variant2}}: Predicting QoR of synthesizing unseen Designs. We select 20 small designs as the training set, and the remaining 14 larger designs as the test set;
    \item \textit{\textbf{Variant3}}: Predicting QoR on unseen Design-Recipe Combination. We randomly selected 70\% of the synthesis recipes across all designs.
\end{itemize}

\cref{tab:task:exper:qor} shows the mean absolute percentage error~(MAPE) for the above three variant tasks.
It indicates that the model achieved high predictive accuracy in area prediction, while timing prediction posed a challenge, with slightly lower accuracy compared to area prediction. 
This discrepancy may stem from the complexity of timing prediction, necessitating more refined feature engineering and model tuning. 
Overall, the model demonstrated good generalization performance with unknown circuit and optimization sequence pairs, indicating a certain level of robustness in the model.
\cref{fig:task:exper:qor} visualizes the relationship between the prediction and ground truth.

\vspace{-0.2cm}
\subsection{Task4: Probabilistic Prediction}
\label{sec:tasks:task4}

\subsubsection{Problem Formulation}
The Probabilistic Prediction task is a gate-level task that predicts the truth-table probability of the gate in the circuit~\cite{DeepGate}.
It can predict the logical probability of the gate without computing the truth table and can be formulated by:
\begin{italics}
Given a Boolean Circuit $\mathcal{C}$, the logic probability of a gate $v$ is defined as the frequency of the $1$s in the gate's truth table.
\end{italics}

\subsubsection{Dataset Adaptation}
Task 4 part of \cref{fig:dataset} shows the dataset components of probability prediction.
For each Boolean network in the OpenLS-D-v1 items, we label each Boolean network with a probability vector~(the probability of each node) which is computed by random simulation of the Boolean network by ``\textit{simulate(circuit:Circuit)}'' with enough activate vectors of PIs.

\subsubsection{Solution: Node Embedding Learning}

\begin{figure}[t]
    \centering
    \includegraphics[width=1\linewidth]{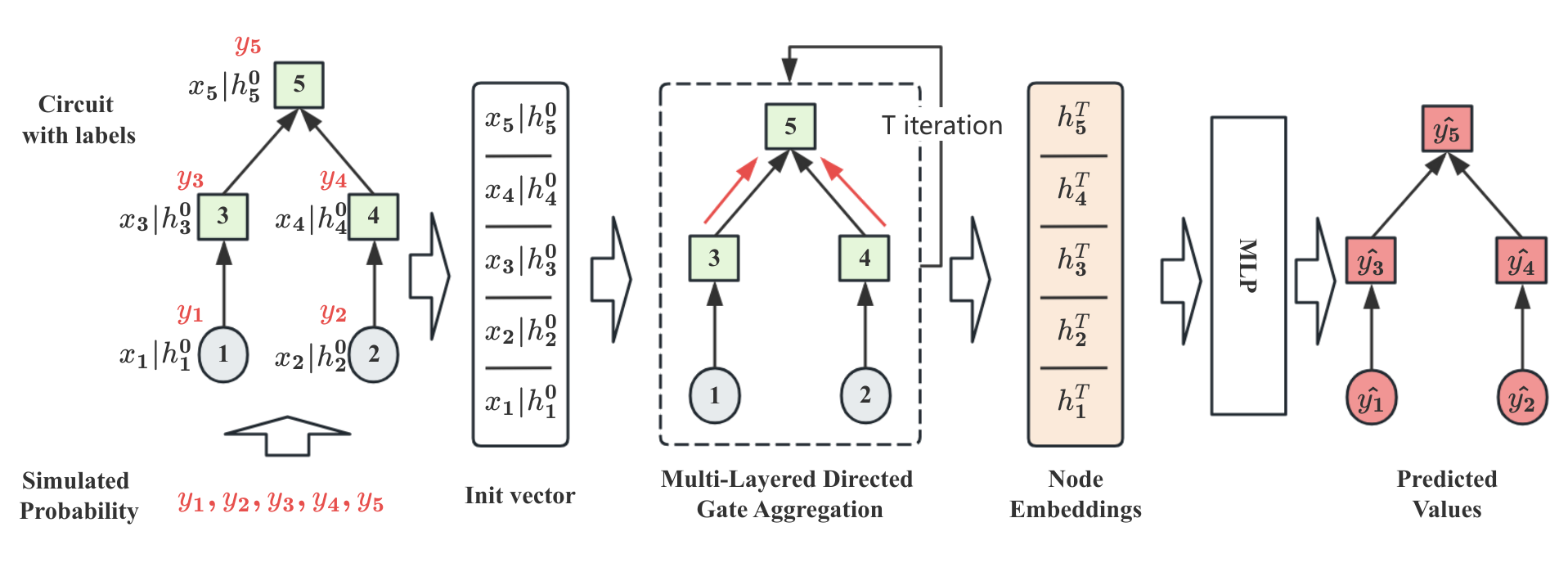}
\vspace{-0.5cm}
    \caption{The node embedding model.}
    \label{fig:task:solution:probability}
\vspace{-0.2cm}
\end{figure}

Instead of computational simulation of the circuit, the probability can be easily attained by the Node embedding-based method.
\cref{fig:task:solution:probability} shows the node embedding learning-based probability prediction method.
For a given Boolean network with the probability vector~{$x_1, x_2, ..., x_n$}, the first step is to initialize the node feature vector~{$h^0_1, h^0_2, ..., h^0_n$} for the nodes;
Then a $T$-layered directed gate aggregation is performed to learn the feature of each node.
Finally, an MLP layer is to reduce each node's feature to 1 to predict the logic probability of each node.
The loss function is the average prediction error~(PE) loss, followed by DeepGate. 

\subsubsection{Experimental Results}

\begin{table}[h]
\centering
\scriptsize
\setlength{\tabcolsep}{4.0pt}
\caption{Average prediction error and time comparison by the different node embedding methods, while the comparison column represents DeepGate2/GraphSAGE.}
\begin{tabular}{c|ll|ll|ll}
\toprule
\multirow{2}{*}{\diagbox{recipes}{Model}} & \multicolumn{2}{c|}{GraphSAGE} & \multicolumn{2}{c|}{DeepGate2} & \multicolumn{2}{c}{Comparison} \\
                                          &     PE & Time(s)               &     PE & Time(s)            & PE & Time(x)  \\
\midrule
\textit{100}                              &  0.011  & 0.050                &    0.0082  & 1.42 & 24\%{\color{darkgreen} $\uparrow$} & 27.40{\color{red} $\downarrow$}  \\          
\textit{500}                              &  0.002  & 0.098                &    0.001   & 1.48 & 50\%{\color{darkgreen} $\uparrow$} & 14.10{\color{red} $\downarrow$}  \\          
\textit{1000}                             &  0.0008 & 0.038                &    0.0002  & 2.20 & 75\%{\color{darkgreen} $\uparrow$} & 56.89{\color{red} $\downarrow$} \\          
\bottomrule
\end{tabular}
\label{tab:task:exper:pp}
\vspace{-0.2cm}
\end{table}

The experiment presented here is based on a selection of 10 designs from the OpenLS-D-v1 dataset: \textit{ctrl}, \textit{router}, \textit{int2float}, \textit{ss\_pcm}, \textit{usb\_phy}, \textit{sasc}, \textit{cavlc}, \textit{simple\_spi}, \textit{priority}, and \textit{steppermotordrive}. 
The recipe size of the dataset ranges from \textit{100} to \textit{1000}, showing the scalability and adaptability of the dataset. 
A 70-30 split is used for training and validation, respectively.

The hyperparameters used in this experiment are as follows: input feature size of 64, hidden feature size of 128, learning rate of 0.001, learning decay rate of 1e-4, and batch size of 64 for fast training.
The comparison between different methods can be seen in \cref{tab:task:exper:pp}, where DeepGate outperforms the GraphSAGE model in various scales of the dataset.



\section{Discussion}
\label{sec:discussion}

\begin{table}[t]
\centering
\setlength{\tabcolsep}{1.2pt} 
\caption{The Tasks comparison between the logic synthesis-related Datasets.}
\begin{tabular}{c|c|c|c|c}
\toprule
\diagbox{Tasks}{Dataset}  & OpenABC-D~\cite{openabc-d} & Deepgate~\cite{DeepGate} & Gamora~\cite{Gamora} & \textbf{OpenLS-D-v1} \\
\midrule
\makecell{Circuit \\Classification}  & $\surd$  & $\times$ & $\times$ & $\surd$  \\ \hline
\makecell{Node \\Classification}     & $\times$ & $\times$ & $\surd$  & $\surd$  \\  \hline
\makecell{QoR \\Prediction}          & $\surd$  & $\times$ & $\times$ & $\surd$  \\  \hline
\makecell{Circuit \\Ranking}         & $\times$ & $\times$ & $\times$ & $\surd$  \\  \hline
\makecell{probability \\Prediction}  & $\times$ & $\surd$  & $\times$ & $\surd$  \\
\bottomrule
\end{tabular}
\label{tab:discussion:comparison}
\vspace{-0.5cm}
\end{table}

The comparison table in \cref{tab:discussion:comparison} illustrates the comprehensive task support offered by the OpenLS-D-v1 dataset in contrast to other datasets such as OpenABC-D, DeepGate, and Gamora. 
Each dataset was developed with specific objectives, and they serve different roles in the enhancement of logic synthesis processes through machine learning.

OpenLS-D-v1 distinguishes itself by offering comprehensive task coverage, essential for developing versatile and generalized machine learning models within the logic synthesis domain. 
This wide-ranging support enables diverse experimental setups, paving the way for novel approaches to circuit design and analysis within a unified dataset framework.
This broad task support enables researchers to explore new methodologies and improve existing processes.

In summary, OpenLS-DGF is capable of supporting a variety of machine learning tasks highlighting its potential as a general resource and standardized process in the field of logic synthesis. 
Additionally, the OpenLS-D-v1 dataset further enhances this by providing a versatile foundation for future research and innovation.

\section{Conclusion}
\label{sec:conclusion}
In this paper, we begin by addressing the lack of a dataset generation flow specifically targeting multiple tasks in logic synthesis.
To overcome this limitation, we propose OpenLS-DGF, an adaptive dataset generation framework tailored for machine learning tasks within logic synthesis.
We highlight that the proposed solution framework can target multiple machine-learning tasks in logic synthesis.
We also generate the OpenLS-D-v1 dataset, created using OpenLS-DGF, and demonstrate its utility by implementing and evaluating four typical tasks on OpenLS-D-v1.
The results of these tasks validate the effectiveness and versatility of our framework.

Future work will focus on enhancing the efficiency of the generation flow and benchmarking the specific machine-learning tasks for logic synthesis.
Furthermore, we aim to integrate the machine learning models into the logic synthesis flow, contributing to an improved EDA flow for enhanced circuit performance.

\section*{Acknowledgment}
The authors would like to express their gratitude to the teams behind the related open-source tools, including Yosys~\cite{Yosys}, ABC~\cite{ABC}, LSILS~\cite{lsils}, OpenRoad~\cite{openroad}, iEDA~\cite{iEDA}, and LogicFactory~\cite{LogicFactory}.

\bibliographystyle{IEEEtran}
\bibliography{ref}


\end{CJK}
\end{document}